%% file: main.tex
\documentclass[letterpaper, 10 pt, journal, twoside, final]{IEEEtran}
\input{macros}

\begin{document}
\title{On Maximum Clique and Maximum Consensus Formulations in Point Cloud Registration}

\author{

Abolfazl Babanazari$^{1}$, Kaveh Fathian$^{1}$%
\thanks{$^{1}$Department of Computer Science, Colorado School of Mines. Email: abolfazl\_babanazari@mines.edu, fathian@ariarobotics.com}
\thanks{
This work has been submitted to the IEEE for possible publication. Copyright may be transferred without notice, after which this version may no longer be accessible.
}
\thanks{
The authors used a large language model (OpenAI, GPT-5.6) for minor language editing. All technical content and results were developed and verified by the authors, who take full responsibility.
}
}

\maketitle

\begin{abstract}
Maximum clique and maximum consensus are widely used formulations for robust point cloud registration in robotics.
Maximum consensus seeks one transformation that aligns as many matches as possible, while maximum clique selects matches through pairwise distance tests.
Maximum clique is often used as a substitute for or an approximation to maximum consensus, with empirical evidence suggesting a close relationship.
However, how closely the two formulations agree is not fully understood.
We study this relationship theoretically.
We show that a maximum clique can have arbitrarily large alignment error and be arbitrarily larger than the maximum consensus set at any fixed positive tolerance, and derive geometric conditions that bound this error.
When rotation is fixed, we derive sharp error bounds and tests that exactly characterize maximum consensus for the translation subproblem.
Finally, we study group-$k$ consistency, which tests groups of $k$ matches, and determine the smallest group sizes that guarantee alignment error bounds independent of point cloud scale and geometry.
These are $d+1$ for rotations and translations and $d+2$ when reflections are allowed, in dimensions $d\ge2$.
Interactive examples and proof verification using the Lean theorem prover are available at \url{https://ariarobotics.github.io/mc-vs-mc-registration/}.
\end{abstract}

\section{Introduction}
\begin{figure}[t]
\centering
\centering
\begin{subfigure}[b]{\linewidth}
  \centering
  \includegraphics[width=\linewidth]{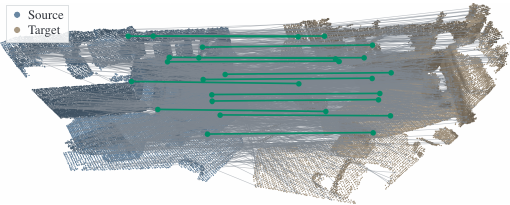}
  \caption{Initial correspondences}
  \label{fig:opening-a}
\end{subfigure}%
\par\vspace{2pt}
\begin{subfigure}[b]{\linewidth}
  \centering
  \includegraphics[width=\linewidth]{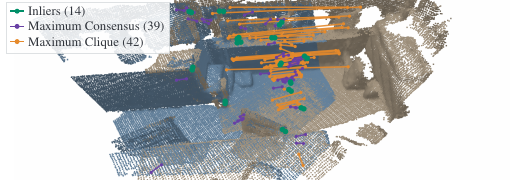}
  \caption{Correspondence sets under ground-truth alignment}
  \label{fig:opening-b}
\end{subfigure}%
\par\vspace{2pt}
\begin{subfigure}[b]{0.49250\linewidth}
  \centering
  \includegraphics[width=\linewidth]{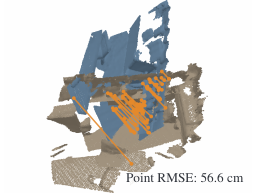}
  \caption{Maximum clique alignment}
  \label{fig:opening-c}
\end{subfigure}%
\hfill
\begin{subfigure}[b]{0.49250\linewidth}
  \centering
  \includegraphics[width=\linewidth]{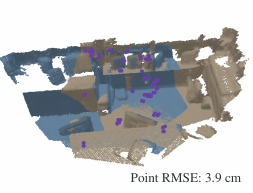}
  \caption{Maximum consensus alignment}
  \label{fig:opening-d}
\end{subfigure}%
\caption{
Maximum clique and maximum consensus can yield very different alignments.
(a) Tentative feature correspondences between two 3DMatch~\cite{zeng20173dmatch} scans using FPFH~\cite{rusu2009fast} and mutual nearest neighbor matching.
(b) Scans aligned using the ground-truth transformation, showing the inlier correspondences, maximum consensus set at tolerance $5$ cm, and maximum clique of the pairwise consistency graph at tolerance $10$ cm.
(c) Fitting the maximum clique produces a poor registration despite its pairwise consistency.
(d) Fitting the maximum consensus set closely aligns the scans.
}
\label{fig:opening-correspondences}
\end{figure}

\begin{figure*}[t]
\centering
\includegraphics[width=\textwidth]{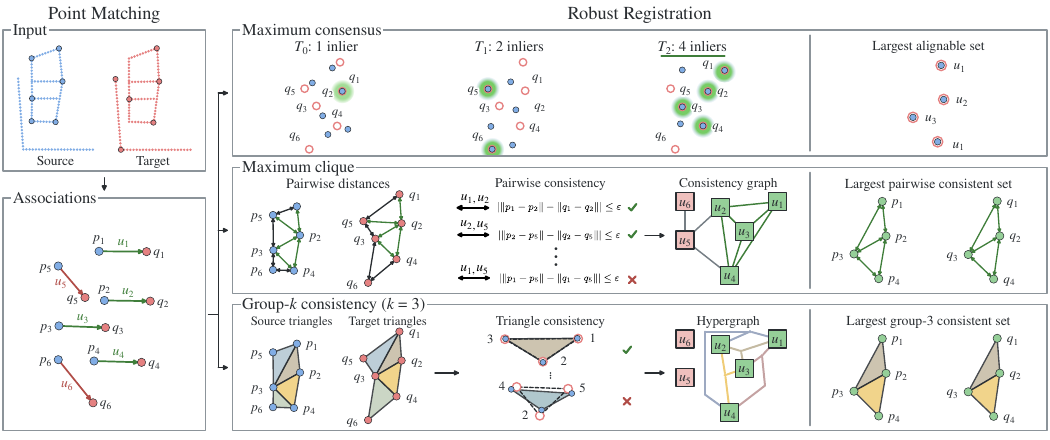}
\caption{%
Point matching followed by correspondence selection.
Maximum consensus finds the transformation, $T_2$, that aligns the most matches within tolerance among possible transformations such as $T_0$, $T_1$ and $T_2$.
Maximum clique selects a largest set passing all pairwise distance tests.
Group-3 consistency requires every triple in the selected set to be alignable at that tolerance, possibly by different transformations.
}
\label{fig:registration-pipeline}
\end{figure*}
To build a map or localize within one, a robot needs to align 3D environment scans taken from different viewpoints~\cite{yang2020teaser,mangelson2018pcm}.
Point cloud registration estimates the alignment between the scans when they are represented as point clouds.
When no initial alignment is available, registration pipelines begin by matching local features across the scans, providing a set of tentative correspondences~\cite{zeng20173dmatch}.
Repeated structures and limited overlap can make many of these matches incorrect, so registration must identify reliable matches for an alignment.

Maximum consensus and maximum clique are two widely used formulations for selecting reliable matches.
\emph{Maximum consensus} seeks one transformation that aligns as many correspondences as possible within a prescribed tolerance.
\emph{Maximum clique} selects a largest set that passes all pairwise distance tests, without estimating a transformation.
The two are closely related, and \textbf{maximum clique is often used as a substitute for maximum consensus, but how far their solutions can differ is not fully understood.}

At matched tolerances, maximum clique is a relaxation of maximum consensus.
Any set that one transformation aligns within the point tolerance passes every pairwise test and therefore forms a clique, so maximum clique size gives an upper bound on maximum consensus.
The converse does not hold in general~\cite{zhang20233d,laserna2025clireg}.
This upper bound alone does not tell us the clique's alignment error or how many of its matches can be aligned within the prescribed tolerance.
Fig.~\ref{fig:opening-correspondences} shows an example in practice, where a maximum clique and a maximum consensus set of similar sizes have few matches in common and produce very different registration results.

Testing larger groups offers a way to narrow this gap by detecting inconsistencies that pairwise tests miss.
In \emph{group-$k$ consistency}, every group of up to $k$ matches must admit an alignment within the prescribed tolerance, although different groups may use different transformations~\cite{forsgren2024gkcm}.
Fig.~\ref{fig:registration-pipeline} compares these three selection approaches on a small example.

In this paper, we study the theoretical relationship between these three formulations, independently of a specific registration pipeline.
We begin with the simpler problem of known rotation, where pairwise tests bound the translation error and testing larger groups gives an exact characterization of maximum consensus.
When rotation must also be estimated, \textbf{we show a maximum clique can have arbitrarily large alignment error and be arbitrarily larger than a maximum consensus set.}
We derive bounds that explain how this error depends on point cloud geometry.
We then show how group-$k$ tests give alignment error guarantees without assumptions on point cloud scale or geometry, and determine the smallest group sizes needed.
In summary, the contributions of this paper are as follows:
\begin{itemize}
\item We show that maximum clique and maximum consensus methods for point cloud registration can give different results in general.
In particular, for certain point clouds, maximum clique can produce matches that cannot be obtained through maximum consensus, regardless of the chosen tolerance.
\item When the rotation that aligns two point clouds is fixed, we provide sharp lower and upper bounds on the maximum consensus solution size in terms of the maximum clique size.
In particular, in a $d$-dimensional space, we show that maximum clique using group-$(d+1)$ consistency is equivalent to maximum consensus.
\item In the general case with unknown rotation, we show that $d+1$ is the smallest group size for which maximum clique using group consistency can provide lower and upper bounds on the maximum consensus solution size.
When reflections are allowed, the smallest such group size is $d+2$.
\end{itemize}

\section{Related Work}
\label{sec:related-work}
Maximum consensus methods jointly estimate a transformation and select the correspondences that it fits within a tolerance.
Random sampling can approximate maximum consensus by fitting small subsets and counting their inliers~\cite{fischler1981random}.
Deterministic methods use branch-and-bound search to solve the maximum consensus problem~\cite{cai2019practical} or subproblems based on other robust losses~\cite{huang2024scalable}.
Geometric pruning methods can reduce this search by removing correspondences while preserving the maximum consensus solution~\cite{bustos2018gore}.

Clique-based outlier rejection offers practical runtimes and robustness to extreme outlier rates, with reported registration results including cases where over $99\%$ of matches are incorrect~\cite{parra2019practical,fathian2024clipperplus}.
Approaches include exact maximum clique search~\cite{parra2019practical}, approximations using large maximal cliques~\cite{fathian2024clipperplus}, and evaluating multiple candidate cliques before selecting an alignment~\cite{zhang20233d,laserna2025clireg}.
Graph structure and effective pruning can make maximum clique search practical despite its worst-case complexity~\cite{walteros2020easyclique,parra2019practical}.
Clique selection can precede robust fitting~\cite{yang2020teaser,shi2021robin} or maximum consensus in separate rotation, translation, and scale subproblems~\cite{gentner2023gmcr}.

Extensions of pairwise consistency use triangles~\cite{zhou2016fast} or shared neighbors~\cite{chen2022sc2pcr} to reject matches that pass individual pairwise tests.
Group-$k$ consistency generalizes compatibility checks to $k$ measurements and formulates selection as maximum clique in a hypergraph~\cite{forsgren2024gkcm}.
These tests capture relations among larger groups that pairwise checks can miss.

Beyond point cloud registration, consistency-based selection is used in inter-robot loop-closure selection~\cite{mangelson2018pcm}, rotation averaging~\cite{shi2021robin}, and object-based localization~\cite{peterson2025roman}.
These applications share the question of whether locally compatible measurements support one common estimate.
A related geometric question arises in global graph rigidity, which asks when edge lengths uniquely determine a point configuration up to rigid motion~\cite{gortler2010globalrigidity}.

This work studies the theoretical relationship between maximum consensus, maximum clique, and group-$k$ consistency in point cloud registration.
We use results from classical geometry~\cite{henk1992generalization}, near-isometry theory~\cite{alestalo2001isometric,vaisala2001nearisometries}, and Procrustes analysis~\cite{ariasc_castro2020procrustes,maystre2025procrustes,lesk1998chebyshev} to establish when consistency tests guarantee a bound on alignment error and when they can fail.

\section{Preliminaries}
\label{sec:preliminaries}
Let $\{(\mathbf{p}_i,\mathbf{q}_i)\}_{i=1}^n$, with $\mathbf{p}_i,\mathbf{q}_i\in\mathbb{R}^d$, be a set of candidate correspondences.
A correspondence set $\mathcal{C}$ denotes a subset of the indices $\{1,\ldots,n\}$.
Registration seeks a rigid transformation $T(\mathbf{p})=\mathbf{R}\mathbf{p}+\mathbf{t}$, where $\mathbf{t}\in\mathbb{R}^d$ and $\mathbf{R}\in\mathrm O(d)$.
We write $T\in\mathrm E(d)$ for such transformations.
We initially allow reflections to show that pairwise consistency can fail even when reflections are permitted. See Fig.~\ref{fig:reflection}.
Proper rigid motions are treated separately in Section~\ref{ssec:proper-motion}.
\begin{figure}[t]
\centering
\begin{subfigure}[b]{0.48529\linewidth}
  \centering
  \includegraphics[width=\linewidth]{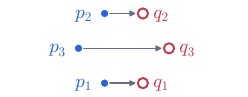}
  \caption{Associations}
  \label{fig:reflection-a}
\end{subfigure}%
\hfill
\begin{subfigure}[b]{0.48529\linewidth}
  \centering
  \includegraphics[width=\linewidth]{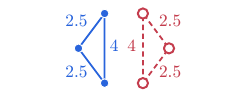}
  \caption{Pairwise distances}
  \label{fig:reflection-b}
\end{subfigure}%
\par\vspace{2pt}
\begin{subfigure}[b]{0.48529\linewidth}
  \centering
  \includegraphics[width=\linewidth]{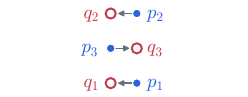}
  \caption{Maximum clique alignment}
  \label{fig:reflection-c}
\end{subfigure}%
\hfill
\begin{subfigure}[b]{0.48529\linewidth}
  \centering
  \includegraphics[width=\linewidth]{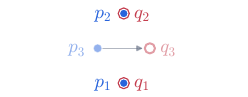}
  \caption{Maximum consensus alignment}
  \label{fig:reflection-d}
\end{subfigure}%
\caption{
Two reflected triangles have identical pairwise distances, forming a clique at $\epsilon  = 0$.
At $\tau=0.5$, maximum consensus restricted to proper transformations can align only two matches.
}
\label{fig:reflection}
\end{figure}

\subsection{Maximum consensus}
Given a point-error tolerance $\tau\in\mathbb{R}^+$, maximum consensus seeks a transformation that aligns as many correspondences as possible within $\tau$.
To express this objective in terms of subsets, define the alignment error of a correspondence set $\mathcal{C}$ as
\begin{equation}
\rho(\mathcal{C})=\min_{T\in\mathrm{E}(d)}
\max_{i\in \mathcal{C}}\|T(\mathbf{p}_i)-\mathbf{q}_i\|.
\label{eq:rho}
\end{equation}
We call $\mathcal{C}$ \emph{alignable at tolerance $\tau$} if a single transformation makes every point-to-point error at most $\tau$, or equivalently $\rho(\mathcal{C})\le\tau$.
We call the largest such set the maximum consensus set and denote its size by
\begin{equation}
\mathrm{MC}(\tau)=\max\{|\mathcal{C}|:\rho(\mathcal{C})\le\tau\}.
\label{eq:consensus-as-consistency}
\end{equation}

\subsection{Pairwise consistency}
Pairwise consistency uses the fact that rigid transformations preserve distances.
Define the distance discrepancy of a pair and its maximum over a correspondence set
\begin{align}
\delta_{ij}&=\bigl|\,\|\mathbf{p}_i-\mathbf{p}_j\|-\|\mathbf{q}_i-\mathbf{q}_j\|\,\bigr|,
\label{eq:distance-discrepancy}\\
\delta(\mathcal{C})&=\max_{i,j\in \mathcal{C}}\delta_{ij}.
\label{eq:delta}
\end{align}

The consistency graph has one vertex per correspondence and an edge between $i$ and $j$ whenever $\delta_{ij}\le\epsilon$.
A clique is a set of vertices in which every pair is connected, and a maximum clique is a largest such set.
Let $\omega(\epsilon)$ denote the maximum clique size at tolerance $\epsilon$
\begin{equation}
\omega(\epsilon)=\max\{|\mathcal{C}|:\delta(\mathcal{C})\le\epsilon\}.
\label{eq:clique-as-consistency}
\end{equation}
Note that the edge tolerance $\epsilon$ bounds distance disagreement, whereas $\tau$ measures point error.

If $\mathcal{C}$ is alignable at tolerance $\tau$, the triangle inequality bounds the distance discrepancy of every pair by $2\tau$.
Consequently,
\begin{equation}
\delta(\mathcal{C})\le2\rho(\mathcal{C}),\qquad
\mathrm{MC}(\tau)\le\omega(2\tau).
\label{eq:necessary}
\end{equation}

\subsection{\texorpdfstring{Group-$k$}{Group-k} consistency}
For two correspondences, the alignment error is exactly half their distance discrepancy: $\rho(\{i,j\})=\delta_{ij}/2$.
A set therefore forms a clique at tolerance $2\tau$ if and only if every pair is alignable at tolerance $\tau$.
Group-$k$ consistency extends this requirement to larger subsets~\cite{forsgren2024gkcm}.
We call $\mathcal{C}$ \emph{group-$k$ consistent at tolerance $\tau$} if every subset of at most $k$ correspondences is alignable at $\tau$, allowing a different transformation for each subset.
Equivalently, $\rho_k(\mathcal{C})\le\tau$, where the \emph{group-$k$ consistency error} is the largest alignment error among these subsets
\begin{equation}
\rho_k(\mathcal{C})=\max_{\substack{\mathcal{I}\subseteq \mathcal{C}\\|\mathcal{I}|\le k}}\rho(\mathcal{I}).
\label{eq:full-hierarchy}
\end{equation}
Increasing $k$ strengthens the tests, giving
\begin{equation}
\delta(\mathcal{C})/2 = \rho_2(\mathcal{C}) \le \rho_3(\mathcal{C}) \le \cdots \le \rho(\mathcal{C}).
\label{eq:full-hierarchy-bounds}
\end{equation}

A uniform bound $\rho(\mathcal{C})\le c\rho_k(\mathcal{C})$ means that fitting every subset of at most $k$ correspondences within $\tau/c$ guarantees one common alignment within $\tau$.
Throughout the paper, we therefore derive alignment error bounds, which also give lower bounds on maximum consensus.

\section{Translation Subproblem}
\label{sec:translation}
\begin{figure}[t]
\centering
\centering
\begin{subfigure}[b]{0.3\linewidth}
  \centering
  \includegraphics[width=\linewidth]{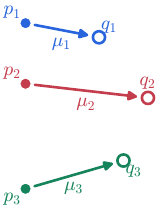}
  \caption{}
  \label{fig:translation-a}
\end{subfigure}%
\hspace{10pt}%
\begin{subfigure}[b]{0.45\linewidth}
  \centering
  \includegraphics[width=\linewidth]{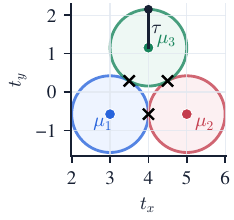}
  \caption{}
  \label{fig:translation-b}
\end{subfigure}%
\caption{
In the translation subproblem, pairwise consistency does not guarantee a common translation at the same tolerance.
(a) Each arrow $\boldsymbol{\mu}_i$ is the translation vector that aligns source $\mathbf{p}_i$ and target $\mathbf{q}_i$.
(b) Every disk of radius $\tau$ centered at $\boldsymbol{\mu}_i$ contains all translations that align $\mathbf{p}_i$ and $\mathbf{q}_i$ within $\tau$.
Crosses mark pairwise intersections, but no translation aligns all three correspondences.
}
\label{fig:translation}
\end{figure}
We begin with the translation subproblem, in which rotation is fixed and only translation is estimated. 
This setting arises naturally in registration pipelines that estimate rotation and translation separately~\cite{yang2020teaser,gentner2023gmcr}, and provides a geometric baseline for understanding what local consistency can guarantee. 
Even in this simpler setting, aligning every pair of correspondences within a tolerance does not ensure a common translation at that tolerance. 
However, the gap is bounded, and we derive sharp alignment error bounds that progressively tighten as larger subsets are tested.

For a fixed rotation $\mathbf{R}$, each correspondence suggests the translation $\boldsymbol{\mu}_i=\mathbf{q}_i-\mathbf{R}\mathbf{p}_i$, illustrated by the arrows in Fig.~\ref{fig:translation}(a).
Applying a translation $\mathbf{t}$ gives correspondence $i$ a point error of $\|\mathbf{t}-\boldsymbol{\mu}_i\|$.
Therefore, the translations that align correspondence $i$ within $\tau$ form a ball of radius $\tau$ centered at $\boldsymbol{\mu}_i$, shown in Fig.~\ref{fig:translation}(b).
Two correspondences are alignable by a common translation within $\tau$ exactly when their disks intersect, or equivalently when $\|\boldsymbol{\mu}_i-\boldsymbol{\mu}_j\|\le2\tau$.
In the figure, every pair of disks intersects, but no translation belongs to all three, and aligning the entire set requires a larger tolerance.

To determine the tolerance required for a common translation, we define the translation-only alignment error as
\begin{equation}
\rho^{(t)}(\mathcal{C})=\min_{\mathbf{t}\in\mathbb{R}^d}\max_{i\in\mathcal{C}}\|\mathbf{t}-\boldsymbol{\mu}_i\|.
\label{eq:translation-radius}
\end{equation}
Geometrically, $\rho^{(t)}(\mathcal{C})$ is the radius of the smallest ball enclosing the candidate translations.
Jung's theorem~\cite{henk1992generalization} bounds this radius in terms of the largest distance between the candidates.
Since the pairwise test ensures that these distances are at most $2\tau$, Jung's theorem bounds the alignment error by
\begin{equation}
\rho^{(t)}(\mathcal{C})\le \tau J_d,
\qquad
J_d=\sqrt{\frac{2d}{d+1}}.
\label{eq:jung}
\end{equation}

In one dimension, $J_1=1$, so maximum clique under this pairwise test is the same as the maximum consensus.
The same equivalence holds for scalar scale estimation~\cite{gentner2023gmcr}.
In 2D and 3D, the pairwise test guarantees a common translation within approximately $1.15\tau$ and $1.22\tau$, respectively.

The generalized Jung's theorem~\cite{henk1992generalization} extends the pairwise bound to group-$k$ consistency tests.
Define $\rho_k^{(t)}$ as in~\eqref{eq:full-hierarchy}, using $\rho^{(t)}$ in place of $\rho$.
The alignment error is then bounded by
\begin{equation}
\rho^{(t)}(\mathcal{C})\le \rho_k^{(t)}(\mathcal{C}) J_{d,k},
\quad
J_{d,k}=\sqrt{\frac{kd}{(d+1)(k-1)}},
\label{eq:translation-factor}
\end{equation}
for $2\le k\le d+1$.
At $k=d+1$, the factor equals one, so aligning every subset of at most $d+1$ correspondences within $\tau$ guarantees a common translation at the same tolerance.
Selecting a largest set satisfying these tests therefore solves translation-only maximum consensus exactly.

\section{Pairwise consistency with unknown rotation}
\label{sec:full-problem}
We now consider the case where both rotation and translation are unknown.
We first show that pairwise consistency does not guarantee bounded alignment error, and that the maximum clique size can exceed the maximum consensus size by an arbitrarily large factor.
We then identify additional geometric conditions under which pairwise consistency guarantees bounded alignment error.

\subsection{Limitations of pairwise consistency}
We show that alignment error can grow arbitrarily large while pairwise consistency holds at a fixed tolerance.
To simplify the arguments, we exploit the following constraint that pairwise consistency does not capture.
\begin{lemma}[Zero-sum combination bound]
\label{lem:affine-relation}
For a correspondence set $\mathcal{C}$ and coefficients $\alpha_i$ satisfying $\sum_{i\in\mathcal{C}}\alpha_i=0$,
\begin{equation}
\bigl|
\|\textstyle\sum_i\alpha_i \mathbf{p}_i\|
-
\|\textstyle\sum_i\alpha_i \mathbf{q}_i\|
\bigr|
\le
\rho(\mathcal{C}) \sum_i|\alpha_i|.
\label{eq:affine-relation}
\end{equation}
\end{lemma}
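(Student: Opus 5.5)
Let $T^\star(\mathbf{p})=\mathbf{R}\mathbf{p}+\mathbf{t}$ be a minimizer in~\eqref{eq:rho}, so that the residuals $\mathbf{e}_i=T^\star(\mathbf{p}_i)-\mathbf{q}_i$ satisfy $\|\mathbf{e}_i\|\le\rho(\mathcal{C})$ for all $i\in\mathcal{C}$. The minimum is attained because $\mathrm O(d)$ is compact and, for $\mathcal{C}\neq\emptyset$, the objective is coercive in $\mathbf{t}$. The case $\mathcal{C}=\emptyset$ is trivial. If one prefers not to argue attainment, take any $T$ with errors at most $\rho(\mathcal{C})+\eta$ and let $\eta\to0$ at the end.

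The key observation is that the zero-sum condition eliminates the translation. Multiplying $\mathbf{q}_i=\mathbf{R}\mathbf{p}_i+\mathbf{t}-\mathbf{e}_i$ by $\alpha_i$ and summing gives
\begin{equation*}
\textstyle\sum_i\alpha_i\mathbf{q}_i=\mathbf{R}\sum_i\alpha_i\mathbf{p}_i+\bigl(\sum_i\alpha_i\bigr)\mathbf{t}-\sum_i\alpha_i\mathbf{e}_i=\mathbf{R}\sum_i\alpha_i\mathbf{p}_i-\sum_i\alpha_i\mathbf{e}_i .
\end{equation*}
Since $\mathbf{R}\in\mathrm O(d)$ preserves norms, $\|\mathbf{R}\sum_i\alpha_i\mathbf{p}_i\|=\|\sum_i\alpha_i\mathbf{p}_i\|$. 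The reverse triangle inequality then yields
\begin{equation*}
\bigl|\|\textstyle\sum_i\alpha_i\mathbf{p}_i\|-\|\sum_i\alpha_i\mathbf{q}_i\|\bigr|\le\|\sum_i\alpha_i\mathbf{e}_i\|\le\sum_i|\alpha_i|\,\|\mathbf{e}_i\|\le\rho(\mathcal{C})\sum_i|\alpha_i| .
\end{equation*}
This is exactly~\eqref{eq:affine-relation}.

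There is no real obstacle here. The only point needing care is the existence of the minimizer, which is handled by compactness or by the limiting argument above. The same computation with $\alpha=(1,-1)$ recovers the bound $\delta_{ij}\le2\rho$ in~\eqref{eq:necessary}.
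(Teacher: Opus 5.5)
Your proof is correct and is essentially the paper's: you pick a transformation whose residuals are bounded by $\rho(\mathcal{C})$, use $\sum_i\alpha_i=0$ to cancel the translation, and finish with norm preservation under $\mathbf{R}$ and the triangle inequality. Your extra justification that the minimum in~\eqref{eq:rho} is attained, with the $\eta\to0$ fallback, is a harmless refinement that the paper leaves implicit.
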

If one transformation aligns all correspondences within $\tau$, it also aligns corresponding midpoints and, more generally, convex combinations within $\tau$.
Differences of convex combinations give the zero-sum combinations used in the lemma.

\begin{figure}[t]
\centering
\centering
\begin{subfigure}[b]{73.190bp}
  \centering
  \includegraphics[width=\linewidth]{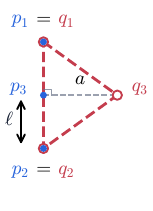}
  \caption{}
  \label{fig:thin-a}
\end{subfigure}%
\hspace{10pt}%
\begin{subfigure}[b]{102.448bp}
  \centering
  \includegraphics[width=\linewidth]{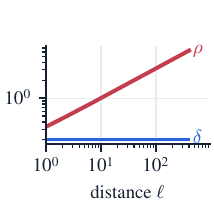}
  \caption{}
  \label{fig:thin-b}
\end{subfigure}%

\caption{
Unbounded joint error at fixed pairwise discrepancy. 
(a) Source and target points. 
(b) Maximum pairwise distance error $\delta$ and alignment error $\rho$ as distance $\ell$ grows.
}
\label{fig:thin}
\end{figure}

\begin{example}[Unbounded alignment error]
\label{ex:unbounded-error}
Consider aligning the point clouds in Fig.~\ref{fig:thin}(a) at tolerance $\tau$, with
\begin{equation}
\begin{aligned}
\mathcal{P}&=\{(0,\ell),(0,-\ell),(0,0)\},\\
\mathcal{Q}&=\{(0,\ell),(0,-\ell),(a,0)\}.
\end{aligned}
\label{eq:thin}
\end{equation}
The point $\mathbf{p}_3$ is the midpoint of $\mathbf{p}_1$ and $\mathbf{p}_2$, whereas $\mathbf{q}_3$ lies a distance of $|a|$ from the midpoint of $\mathbf{q}_1$ and $\mathbf{q}_2$.
Therefore, Lemma~\ref{lem:affine-relation}, with $\boldsymbol{\alpha}=(\frac{1}{2},\frac{1}{2},-1)$, gives $\rho(\mathcal{C})\ge|a|/2$.
Translating every source point horizontally by $a/2$ attains this bound, so $\rho(\mathcal{C})=|a|/2$.

The distance between the first two source points equals the distance between their corresponding target points, and the other two pairwise distances each change by $\delta=\sqrt{\ell^2+a^2}-\ell$.
To keep this discrepancy fixed, choose
\begin{equation}
a=\sqrt{(\ell+2\tau)^2-\ell^2}=2\sqrt{\tau(\ell+\tau)}.
\label{eq:thin-height}
\end{equation}
As shown in Fig.~\ref{fig:thin}(b), this keeps $\delta=2\tau$, while
\begin{equation}
\lim_{\ell\to\infty}\rho(\mathcal{C})
=\lim_{\ell\to\infty}\frac{a}{2}=\infty.
\label{eq:thin-error}
\end{equation}
\end{example}

\begin{figure}[t]
\centering
\includegraphics[width=0.9\columnwidth]{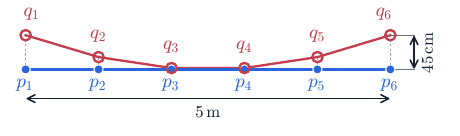}
\caption{
All six correspondences are pairwise consistent at a distance-discrepancy tolerance of $5$ cm, but no triple can be aligned within $2.5$ cm.
Thus maximum clique has size $6$, while maximum consensus has size $2$.
}
\label{fig:parabola}
\end{figure}

\begin{example}[Maximum clique can be arbitrarily larger than maximum consensus]
As illustrated in Fig.~\ref{fig:parabola}, place $n$ source points on a line and their targets on a parabola
\begin{equation}
\begin{aligned}
x_i&=i-\frac{n+1}{2},\qquad i=1,\ldots,n,\\
\mathbf{p}_i&=(\ell x_i,0),\qquad \mathbf{q}_i=(\ell x_i,ax_i^2).
\end{aligned}
\label{eq:parabola}
\end{equation}

A distance calculation similar to that in Example~\ref{ex:unbounded-error} gives
\begin{equation}
\delta_{ij}\le
\frac{a^2|x_i-x_j|(x_i+x_j)^2}{2\ell}.
\label{eq:parabola-pairs}
\end{equation}
Choosing $\ell$ sufficiently large ensures $\delta_{ij}\le2\tau$ for every pair, forming a complete consistency graph with $\omega(2\tau)=n$.

For indices $i<j<k$, co-linearity of source points gives
\begin{equation}
\mathbf{p}_j=\lambda \mathbf{p}_i+(1-\lambda)\mathbf{p}_k,
\qquad
\lambda=\frac{k-j}{k-i}.
\label{eq:parabola-source-average}
\end{equation}

Using Lemma~\ref{lem:affine-relation} with coefficients $\boldsymbol{\alpha}=(\lambda,-1,1-\lambda)$ gives
\begin{equation}
|a|(j-i)(k-j)
\leq
2\rho(\{i,j,k\}).
\label{eq:parabola-triple-error}
\end{equation}
Since $(j-i)(k-j) \ge 1$, the alignment error is at least $|a|/2$.
Choosing $a>2\tau$ makes every triple inconsistent. 
Consequently
\begin{equation}
\omega(2\tau)=n,\qquad \mathrm{MC}(\tau)=2.
\label{eq:clique-consensus-ratio}
\end{equation}
\end{example}

\subsection{Alignment bounds using point cloud geometry}
Both previous counterexamples exploit increasing point cloud diameters to reduce pairwise distance discrepancies.
This dependence on scale is captured by classical near-isometry theory, which studies how closely a map that approximately preserves distances can be matched by a rigid transformation~\cite{alestalo2001isometric,vaisala2001nearisometries}.
Let $\mathcal{C}$ be pairwise consistent at tolerance $\epsilon$, and let $D$ be the diameter of its source points.
For $\epsilon\le D$,
\begin{equation}
\rho(\mathcal{C})\le c_d\sqrt{D\epsilon},
\label{eq:diameter-certificate}
\end{equation}
where $c_d$ depends only on dimension.
For $\epsilon\le0.01D$, the stated constants are $c_2=8.73$ in 2D and $c_3=18.8$ in 3D~\cite{alestalo2001isometric}.
The $\sqrt{D}$ dependence is also unavoidable without additional assumptions such as point cloud spread or thickness, as Example~\ref{ex:unbounded-error} shows.

The diameter bound can be conservative because it uses a single global scale for all pairs.
We instead derive a bound that combines each pair's discrepancy with its own source and target distances.
Define the maximum squared-distance discrepancy of $\mathcal{C}$ as
\begin{equation}
h(\mathcal{C})=\max_{i,j\in\mathcal{C}}
\left|\|\mathbf{p}_i-\mathbf{p}_j\|^2
-\|\mathbf{q}_i-\mathbf{q}_j\|^2\right|.
\label{eq:squared-discrepancy}
\end{equation}
For a pair with source and target distances $u$ and $v$, we have $|u^2-v^2|=|u-v|(u+v)$.
Thus, squared-distance discrepancies weight each pair's distance disagreement by the sum of its source and target lengths, penalizing discrepancies on longer pairs more strongly.

\begin{proposition}[Squared-distance bound]
\label{prop:squared-distance}
For any correspondence set $\mathcal{C}$,
\begin{equation}
\rho(\mathcal{C})^2\le\sqrt{2d}\,h(\mathcal{C}).
\label{eq:squared-certificate}
\end{equation}
\end{proposition}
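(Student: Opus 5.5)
The plan is to pass from squared distances to Gram matrices, solve an orthogonal Procrustes problem through matrix square roots, and read off the per-correspondence errors as column norms. Write $m=|\mathcal{C}|$ and fix a base index $k\in\mathcal{C}$ (alternatively, center at centroids). Form the $d\times m$ matrices $\mathbf{P}=[\mathbf{p}_i-\mathbf{p}_k]_{i\in\mathcal{C}}$ and $\mathbf{Q}=[\mathbf{q}_i-\mathbf{q}_k]_{i\in\mathcal{C}}$, and their Gram matrices $\mathbf{A}=\mathbf{P}^\top\mathbf{P}$ and $\mathbf{B}=\mathbf{Q}^\top\mathbf{Q}$. By the polarization identity,
\begin{equation*}
A_{ij}=\tfrac12\bigl(\|\mathbf{p}_i-\mathbf{p}_k\|^2+\|\mathbf{p}_j-\mathbf{p}_k\|^2-\|\mathbf{p}_i-\mathbf{p}_j\|^2\bigr),
\end{equation*}
and similarly for $\mathbf{B}$. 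So every entry of $\boldsymbol{\Delta}=\mathbf{A}-\mathbf{B}$ is a signed combination of squared-distance discrepancies, each of absolute value at most $h(\mathcal{C})$. Moreover $\mathbf{A}$ and $\mathbf{B}$ are positive semidefinite of rank at most $d$, so $\operatorname{rank}\boldsymbol{\Delta}\le 2d$. This rank bound is where I expect the factor $\sqrt{2d}$ to come from, via $\|\boldsymbol{\Delta}\|_{*}\le\sqrt{2d}\,\|\boldsymbol{\Delta}\|_F$ or an analogous norm comparison.

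Next I would construct the transformation. Using polar decompositions, write $\mathbf{P}=\mathbf{U}\mathbf{A}^{1/2}$ and $\mathbf{Q}=\mathbf{V}\mathbf{B}^{1/2}$, with $\mathbf{U},\mathbf{V}$ partial isometries from $\mathbb{R}^m$ to $\mathbb{R}^d$. Since both ranks are at most $d$, $\mathbf{U}$ and $\mathbf{V}$ can be chosen so that $\mathbf{R}=\mathbf{V}\mathbf{U}^\top$ extends to an element of $\mathrm O(d)$ with $\mathbf{R}\mathbf{U}=\mathbf{V}$ on the relevant ranges; this is where reflections being allowed is used. Take the translation $\mathbf{t}=\mathbf{q}_k-\mathbf{R}\mathbf{p}_k$. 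Then the error of correspondence $i$ is the $i$-th column norm
\begin{equation*}
\|\mathbf{R}\mathbf{p}_i+\mathbf{t}-\mathbf{q}_i\|=\|(\mathbf{A}^{1/2}-\mathbf{B}^{1/2})\mathbf{e}_i\|,
\end{equation*}
so $\rho(\mathcal{C})^2\le\max_i\bigl(\mathbf{X}^2\bigr)_{ii}$ with $\mathbf{X}=\mathbf{A}^{1/2}-\mathbf{B}^{1/2}$.

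The remaining task, and the main obstacle, is to bound the diagonal of $\mathbf{X}^2$ by $\sqrt{2d}\,h(\mathcal{C})$ with no dependence on $m$. Global Powers--St{\o}rmer bounds such as $\|\mathbf{X}\|_F^2\le\|\boldsymbol{\Delta}\|_*$ are too weak, because $\|\boldsymbol{\Delta}\|_*$ grows with $m$. My first attempt is a localized version. Restrict to the correspondences $\{i,k\}$ together with a basis-determining subset of at most $d$ further points, since one transformation fitting an affine frame determines all errors up to the zero-sum relations of Lemma~\ref{lem:affine-relation}. Alternatively, bound $(\mathbf{X}^2)_{ii}$ directly through the operator inequality $\mathbf{X}^2\preceq|\boldsymbol{\Delta}|$, which holds in the commuting case and which I would try to justify in this setting, giving $(\mathbf{X}^2)_{ii}\le(|\boldsymbol{\Delta}|)_{ii}$. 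I would then try to bound $(|\boldsymbol{\Delta}|)_{ii}$ using the rank-$2d$ structure and Cauchy--Schwarz across the at most $2d$ eigen-directions: $\sum_r|\lambda_r|u_{ri}^2\le\sqrt{2d}\,(\sum_r\lambda_r^2u_{ri}^2)^{1/2}=\sqrt{2d}\,\|\boldsymbol{\Delta}\mathbf{e}_i\|$. This still leaves the $i$-th column norm of $\boldsymbol{\Delta}$, which grows like $\sqrt{m}$ unless the base point and centering are chosen cleverly, for example by recentering at $i$ itself so that the $i$-th column of $\boldsymbol{\Delta}$ vanishes. Making this column bound collapse to a single entry of size $h(\mathcal{C})$ is the step I expect to require the most care; if it fails, I would fall back on reducing to a $(d+1)$-point affine frame, where $m$ is bounded by a constant in $d$.
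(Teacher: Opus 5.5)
Your argument fails at the claim that, for $\mathbf R=\mathbf V\mathbf U^\top$ and $\mathbf t=\mathbf q_k-\mathbf R\mathbf p_k$, the residual of correspondence $i$ equals $\|(\mathbf A^{1/2}-\mathbf B^{1/2})\mathbf e_i\|$.

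An orthogonal $\mathbf R$ with $\mathbf R\mathbf U=\mathbf V$ forces $\mathbf U^\top\mathbf U=\mathbf V^\top\mathbf V$. Even then, the residual is $\|\mathbf V\mathbf X\mathbf e_i\|$ with $\mathbf X\mathbf e_i\in\operatorname{range}\mathbf A+\operatorname{range}\mathbf B$. Equality, or even the inequality you need, would require one linear map $\mathbb R^m\to\mathbb R^d$ that is isometric on $\operatorname{range}\mathbf A+\operatorname{range}\mathbf B$. That subspace can have dimension $2d>d$.

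Here is a concrete counterexample. Take $d=1$, $p_k=q_k=0$, and two more correspondences with sources $1,1$ and targets $1,-1$. Then $\mathbf A^{1/2}=\mathbf A/\sqrt2$, $\mathbf B^{1/2}=\mathbf B/\sqrt2$, and every column of $\mathbf X$ has norm at most $\sqrt2$. Yet both $R=\pm1$ with $t=q_k-Rp_k=0$ leave a residual of $2$.

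Only the summed statement $\min_{\mathbf R}\|\mathbf R\mathbf P-\mathbf Q\|_F^2\le\|\mathbf X\|_F^2\le\|\boldsymbol\Delta\|_*\le\sqrt{2d}\,\|\boldsymbol\Delta\|_F$ survives. With your unweighted base-point Gram matrices, whose entries are up to $\tfrac32 h(\mathcal C)$, its right-hand side grows with $m$.

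Your later steps do not repair this:
\begin{itemize}
\item The operator inequality $\mathbf X^2\preceq|\boldsymbol\Delta|$ is unproven, and even granted it would leave $\|\boldsymbol\Delta\mathbf e_i\|\sim\sqrt m$.
\item Recentering at $i$ makes the transformation depend on $i$, so you no longer have one transformation for all correspondences.
\item The frame fallback is not argued and gives no route to the constant $\sqrt{2d}$.
\end{itemize}

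Two ideas are missing: a way to turn an aggregate (Frobenius) bound into a bound on the worst residual of one transformation, and a normalization that makes the aggregate bound independent of $m$. The global Powers--St{\o}rmer bound you discarded as too weak is in fact the right tool once both are in place.

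\emph{Aggregate to worst residual.} The paper uses the duality-gap lemma, $\rho(\mathcal C)^2\le 2\max_{\|\mathbf w\|_2=1}E(\mathbf w)$. It is proved via Sion's minimax theorem and by rounding the expected rotation of an optimal mixed strategy through its SVD.

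\emph{Normalization.} For fixed $\mathbf w$, the paper uses the weighted centered matrices $\mathbf F,\mathbf G$ and the identity $\mathbf F^\top\mathbf F-\mathbf G^\top\mathbf G=-\tfrac12\mathbf H\boldsymbol\Delta\mathbf H$. Here $\mathbf H=\mathbf I-\mathbf w\mathbf w^\top$ and $\Delta_{ik}=w_iw_k(\|\mathbf p_i-\mathbf p_k\|^2-\|\mathbf q_i-\mathbf q_k\|^2)$. Since $\sum_{i,k}w_i^2w_k^2=1$, this gives $\|\boldsymbol\Delta\|_F\le h(\mathcal C)$ no matter how large $\mathcal C$ is.

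\emph{Where your idea enters.} Your rank-$2d$ observation then applies as you anticipated, but in Frobenius form: $E(\mathbf w)\le\sqrt{2d}\,\|\mathbf F^\top\mathbf F-\mathbf G^\top\mathbf G\|_F\le\tfrac{\sqrt{2d}}{2}h(\mathcal C)$. The factor $\tfrac12$ from double centering cancels the factor $2$ from the duality gap, giving $\rho(\mathcal C)^2\le\sqrt{2d}\,h(\mathcal C)$.
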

The proof is given in the Appendix.%

Consider a graph with one vertex per correspondence and an edge whenever the pairwise squared-distance discrepancy is at most $\eta$.
Let $\omega_h(\eta)$ denote its maximum clique size.
At $\eta=\tau^2/\sqrt{2d}$, Proposition~\ref{prop:squared-distance} guarantees that every clique is alignable at tolerance $\tau$, giving
\begin{equation}
\omega_h\!\left(\frac{\tau^2}{\sqrt{2d}}\right)\le\mathrm{MC}(\tau).
\label{eq:squared-lower-bound}
\end{equation}

The same bound also yields a diameter-based guarantee.
Pairwise consistency bounds the target diameter by $D+\epsilon$, so $h(\mathcal{C})\le(2D+\epsilon)\epsilon$.
Proposition~\ref{prop:squared-distance} therefore gives $c_2=2.01$ and $c_3=2.22$ in~\eqref{eq:diameter-certificate} in the same regime.

\section{\texorpdfstring{Group-$k$}{Group-k} consistency}
\label{sec:higher-order}
Group-$k$ consistency strengthens the local tests themselves and can provide alignment error guarantees without controlling point cloud scale or geometric conditioning.
We first show that group-$(d+1)$ consistency can still leave the alignment error unbounded.
We then prove that $d+2$ is the smallest group size that guarantees an alignment error bound.
Finally, we separately study proper rigid transformations in $\mathrm{SE}(d)$, where the consistency hierarchy behaves differently.

\subsection{Limits of testing smaller groups}
Testing larger groups does not immediately provide an alignment error guarantee.
\begin{figure}[t]
\centering
\centering
\begin{subfigure}[b]{84.102bp}
  \centering
  \includegraphics[width=\linewidth]{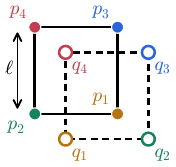}
  \caption{}
  \label{fig:higher-order-a}
\end{subfigure}%
\hspace{10pt}%
\begin{subfigure}[b]{99.584bp}
  \centering
  \includegraphics[width=\linewidth]{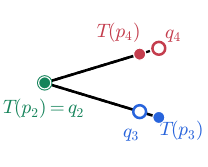}
  \caption{}
  \label{fig:higher-order-b}
\end{subfigure}%
\caption{
Group-3 consistency does not bound the alignment error when reflections are allowed.
(a) Four correspondences forming a rectangle of width $2$ and height $\ell$, with the two lower targets exchanged.
(b) Every triple can be aligned with maximum point error at most $\sqrt{\ell^2+4}-\ell$.
As $\ell$ grows, this error tends to zero, while aligning all four correspondences requires error at least $1$.
}
\label{fig:higher-order}
\end{figure}

\begin{example}[Group-3 consistency cannot bound the error]
\label{ex:planar-higher-order}
 Consider aligning two point clouds whose source points are
 \begin{equation}
\mathcal{P} = \{(1,0),(-1,0),(1,\ell),(-1,\ell)\},
\label{eq:planar-higher-order}
\end{equation}
The target points are identical except that the two lower targets are exchanged, as shown in Fig.~\ref{fig:higher-order}.

Taking $\boldsymbol{\alpha} = (1,-1,-1,1)$ gives $\sum_i \alpha_i \mathbf{p}_i = 0$ while $\sum_i \alpha_i \mathbf{q}_i = (-4, 0)$.
Therefore, Lemma~\ref{lem:affine-relation} gives $\rho(\mathcal{C}) \ge 1$.

On the other hand, every triple can be aligned as in Fig.~\ref{fig:higher-order}(b).
First, align the unique point in one row with its corresponding target.
The angle between its two sides is unchanged, so rotate or reflect the source to make the corresponding sides point in the same directions.
These sides have lengths $\ell$ and $\sqrt{\ell^2+4}$, which are exchanged in the target triangle.
The other two points therefore each have error $\sqrt{\ell^2+4}-\ell$, giving
\begin{equation}
\lim_{\ell\to\infty} \rho_3(\mathcal{C})\le \lim_{\ell\to\infty} \sqrt{\ell^2+4}-\ell = 0.
\label{eq:planar-triple-fit}
\end{equation}
\end{example}

We can extend Example~\ref{ex:planar-higher-order} to every dimension.
First add zero coordinates to all points to embed them in higher dimensions.
In three dimensions, replace $\mathbf{p}_4$ by two matching points whose midpoint is the original $\mathbf{p}_4$.
In higher dimensions, use $d-1$ affinely independent points whose average is the original $\mathbf{p}_4$, with their spread in the additional directions increasing with $\ell$.
More formally, choose fixed vectors $\mathbf{v}_1,\ldots,\mathbf{v}_{d-1}$ spanning $\mathbb{R}^{d-2}$ and satisfying $\sum_{i=1}^{d-1}\mathbf{v}_i=\mathbf{0}$.
Replace the correspondence $\mathbf{p}_4$ by
\begin{equation}
\mathbf{p}_{i+3}=(-1,\ell,\ell\mathbf{v}_i),\quad i=1,\ldots,d-1.
\label{eq:general-obstruction}
\end{equation}
Set $\mathbf{q}_1=\mathbf{p}_2$ and $\mathbf{q}_2=\mathbf{p}_1$, keeping all remaining target points identical to their sources.
This construction gives the following result, which is proved in the Appendix.%
\begin{proposition}[Limits of group-$(d+1)$ tests]
\label{prop:lower-order-obstruction}
For every $d\ge2$, there are correspondence sets $\mathcal{C}_\ell$ in $\mathbb{R}^d$ such that
\begin{equation}
\begin{aligned}
\lim_{\ell\to\infty}\rho_{d+1}(\mathcal{C}_\ell)=0,\qquad
\rho(\mathcal{C}_\ell) \ge 1 \quad \forall \ell>0.
\end{aligned}
\label{eq:general-obstruction-bound}
\end{equation}
\end{proposition}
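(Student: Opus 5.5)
The plan is to use the explicit family built just above. Set $\mathbf{p}_1=(1,0,\mathbf{0})$, $\mathbf{p}_2=(-1,0,\mathbf{0})$, $\mathbf{p}_3=(1,\ell,\mathbf{0})$ and $\mathbf{p}_{i+3}=(-1,\ell,\ell\mathbf{v}_i)$ for $i=1,\ldots,d-1$. The targets are $\mathbf{q}_1=\mathbf{p}_2$, $\mathbf{q}_2=\mathbf{p}_1$ and $\mathbf{q}_j=\mathbf{p}_j$ for $j\ge3$. The set $\mathcal{C}_\ell$ has $d+2$ correspondences. The lower bound $\rho(\mathcal{C}_\ell)\ge1$ will come from Lemma~\ref{lem:affine-relation}, exactly as in Example~\ref{ex:planar-higher-order}. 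The upper bound on $\rho_{d+1}$ will come from an explicit transformation for each proper subset.

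\emph{Lower bound.} Take $\alpha_1=1$, $\alpha_2=-1$, $\alpha_3=-1$ and $\alpha_{i+3}=1/(d-1)$. These coefficients sum to zero, and $\sum_i|\alpha_i|=4$. Since $\sum_i\mathbf{v}_i=\mathbf{0}$, the average of the $\mathbf{p}_{i+3}$ is $(-1,\ell,\mathbf{0})$. Hence $\sum_i\alpha_i\mathbf{p}_i=\mathbf{0}$, while $\sum_i\alpha_i\mathbf{q}_i=(-4,0,\mathbf{0})$. Lemma~\ref{lem:affine-relation} then gives $4\le4\rho(\mathcal{C}_\ell)$ for every $\ell>0$.

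\emph{Upper bound on $\rho_{d+1}$.} By the monotonicity in~\eqref{eq:full-hierarchy-bounds}, it suffices to treat subsets $\mathcal{I}$ of size exactly $d+1$, i.e.\ $\mathcal{C}_\ell$ with one correspondence removed.

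If $\mathbf{p}_2$ is removed, rotate by angle $2\arctan(1/\ell)$ in the $(x_1,x_2)$-plane about the flat $\{x_1=0,\,x_2=\ell\}$. All other coordinates stay fixed. This carries $\mathbf{p}_1$ exactly to $(-1,0,\mathbf{0})=\mathbf{q}_1$. Every top point lies at $(x_1,x_2)$-distance $1$ from the center, so each moves by $O(1/\ell)$. Removing $\mathbf{p}_1$ is symmetric.

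If a top point is removed, let $G$ be the affine hull of the $d-1$ remaining top points. I will assume $G$ is a $(d-2)$-flat inside $\{x_2=\ell\}$, which holds for generic $\mathbf{v}_i$. I will use a reflection that fixes $G$ pointwise and acts only in the orthogonal $2$-plane $\mathrm{span}(\mathbf{e}_2,\mathbf{n})$, where $\mathbf{n}$ is the unit normal to $G$ inside $\{x_2=\ell\}$. Since $\mathbf{p}_1-\mathbf{p}_2=2\mathbf{e}_1$, the projections of $\mathbf{p}_1,\mathbf{p}_2$ onto that plane, relative to $G$, are $(-\ell,\,s+2\mathbf{e}_1\!\cdot\mathbf{n})$ and $(-\ell,\,s)$ for some offset $s$. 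Choose the reflection axis to bisect the directions to these two projected points. Each point then lands on the ray of the other's target, and the error is the length difference of the two projected vectors. That difference is $O(1/\ell)$ provided $s$ stays bounded.

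When $\mathbf{p}_3$ is removed, this is exactly the planar computation: $G=\{x_1=-1,\,x_2=\ell\}$, $s=0$, and the error is $\sqrt{\ell^2+4}-\ell$. When some $\mathbf{p}_{j+3}$ is removed, I will check boundedness of $s$ by scaling. The directions of $G$ are spanned by differences $\ell(\mathbf{v}_i-\mathbf{v}_k)$ and $(2,0,-\ell\mathbf{v}_k)$. After dividing by $\ell$, these converge to fixed directions. Hence $\mathbf{n}$ converges to a fixed unit vector, and the signed offset of $(\pm1,\cdot,\mathbf{0})$ from $G$ along $\mathbf{n}$ stays bounded.

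\emph{Main obstacle.} The hardest step is this last case: I must verify that $G$ has full dimension $d-2$ and that $s$ stays bounded uniformly in $\ell$. If necessary, I will fix a concrete choice of $\mathbf{v}_i$, such as the vertices of a regular simplex centered at $\mathbf{0}$, so that every subfamily of $d-2$ of them is linearly independent, and compute $\mathbf{n}$ explicitly. With every $(d+1)$-subset aligned within $O(1/\ell)$, we get $\rho_{d+1}(\mathcal{C}_\ell)\to0$, which completes~\eqref{eq:general-obstruction-bound}.
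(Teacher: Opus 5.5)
Your construction, your zero-sum certificate for $\rho(\mathcal C_\ell)\ge1$, and your case split are essentially the paper's. Your rotation about $(0,\ell)$ in the $(x_1,x_2)$-plane, with error $2/\sqrt{1+\ell^2}$, is a valid substitute for the paper's planar triangle fit when $\mathbf p_1$ or $\mathbf p_2$ is omitted. One claim in the reflection case is false as stated, though. Your reflection fixes $G$ pointwise and acts only on $\mathrm{span}(\mathbf e_2,\mathbf n)$. It therefore leaves unchanged the component of $\mathbf p_1-\mathbf p_2=2\mathbf e_1$ along the direction space of $G$. When some $\mathbf p_{j+3}$ is removed and $d\ge3$, that direction space contains the vectors $(2,0,-\ell\mathbf v_k)$, $k\ne j$, so $|\mathbf e_1\cdot\mathbf n|<1$. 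Write $\mathbf b_1=(-\ell,s+2\mathbf e_1\cdot\mathbf n)$ and $\mathbf b_2=(-\ell,s)$ for your projected vectors. Then
\[
\|T(\mathbf p_1)-\mathbf q_1\|^2=\bigl(\|\mathbf b_1\|-\|\mathbf b_2\|\bigr)^2+4\bigl(1-(\mathbf e_1\cdot\mathbf n)^2\bigr),
\]
which is not just the squared length difference. For example, take $d=3$, $\mathbf v_1=1$, $\mathbf v_2=-1$ and remove $\mathbf p_5$. Then $\mathbf n=(\ell,0,2)/\sqrt{\ell^2+4}$, and the neglected part is $4/\sqrt{\ell^2+4}$, about twice the part you kept. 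The fix is to show $\mathbf n\to\pm\mathbf e_1$, not merely that $\mathbf n$ converges. The conditions $\mathbf n\perp\mathbf e_2$ and $\mathbf n\perp(2,0,-\ell\mathbf v_k)$ for $k\ne j$ give $\mathbf n\propto(1,0,(2/\ell)\mathbf u)$, where $\mathbf u\cdot\mathbf v_k=1$ for all $k\ne j$. Hence the extra term is $O(\ell^{-2})$, and your conclusion survives.

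The step you flag as the main obstacle is not one. For any admissible $\mathbf v_i$, every $d-2$ of them form a basis of $\mathbb R^{d-2}$, because the omitted one is minus the sum of the rest. The affine hull of all $d-1$ of them contains their centroid $\mathbf 0$, so it is all of $\mathbb R^{d-2}$. These facts show that $G$ is always a $(d-2)$-flat and that $\mathbf u$ exists, with no genericity or special choice needed. Likewise, $s$ is bounded outright: $\mathbf p_3\in G$ and $\mathbf n\perp\mathbf e_2$ give $s=(\mathbf p_2-\mathbf p_3)\cdot\mathbf n=-2\,\mathbf e_1\cdot\mathbf n$. The paper avoids the in-plane versus out-of-plane bookkeeping by reflecting in a different hyperplane through the same retained points. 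It uses the linear hyperplane through the origin, which is the midpoint of $\mathbf p_1,\mathbf p_2$, and the $d-1$ retained points. Its normal is $(1,-\mathbf c)$ with $\mathbf c=O(1/\ell)$. The reflection then sends $\mathbf p_1=\mathbf e_1$ to within $2\|\mathbf c\|$ of $\mathbf q_1=-\mathbf e_1$ directly, and by linearity $\mathbf p_2$ to within $2\|\mathbf c\|$ of $\mathbf q_2$.
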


\subsection{Rigid motions allowing reflections}
Proposition~\ref{prop:lower-order-obstruction} shows that group-$(d+1)$ consistency cannot guarantee an alignment error bound.
Increasing the group size by one, however, gives a guarantee independent of the number of correspondences and their geometry.
\begin{theorem}[Group-$k$ consistency]
\label{thm:higher-order}
For every correspondence set $\mathcal{C}$ of points in $\mathbb R^d$,
\begin{equation}
\rho(\mathcal{C})\le2\sqrt d\,\rho_{d+2}(\mathcal{C}).
\label{eq:full-local-global}
\end{equation}
\end{theorem}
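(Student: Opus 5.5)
The plan is to reduce the global alignment problem to a fixed reference simplex and to use one $(d+2)$-subset for each remaining correspondence. Set $\tau=\rho_{d+2}(\mathcal{C})$.

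\textbf{Reference simplex and barycentric coordinates.} First assume the source points of $\mathcal{C}$ affinely span $\mathbb{R}^d$. Choose $d+1$ indices $\mathcal{S}=\{s_0,\ldots,s_d\}\subseteq\mathcal{C}$ whose source simplex has maximal volume. Write every source point in barycentric coordinates with respect to this simplex,
\[
\mathbf{p}_j=\sum_{i\in\mathcal{S}}\lambda_{ji}\mathbf{p}_i,\qquad \sum_{i\in\mathcal{S}}\lambda_{ji}=1 .
\]
The classical maximal-volume argument (Cramer's rule) gives $|\lambda_{ji}|\le1$, since replacing a vertex by $\mathbf{p}_j$ cannot increase the volume. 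This is the only geometric input. It is what makes the bound independent of scale and conditioning, unlike the diameter-based bound \eqref{eq:diameter-certificate} or Proposition~\ref{prop:squared-distance}.

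\textbf{Candidate transformation and error decomposition.} Let $T$ be an optimal transformation for $\mathcal{S}$. Since $|\mathcal{S}|=d+1\le d+2$, its errors $\mathbf{e}_i=T(\mathbf{p}_i)-\mathbf{q}_i$ satisfy $\|\mathbf{e}_i\|\le\tau$. For any $j\in\mathcal{C}\setminus\mathcal{S}$, the set $\mathcal{S}\cup\{j\}$ has $d+2$ elements, so some $T_j\in\mathrm{E}(d)$ aligns it within $\tau$. Because $T$ and $T_j$ are affine and $\sum_i\lambda_{ji}=1$, they commute with affine combinations. Hence
\[
T(\mathbf{p}_j)-\mathbf{q}_j=\sum_{i}\lambda_{ji}\mathbf{e}_i+\Bigl(\sum_i\lambda_{ji}\mathbf{q}_i-\mathbf{q}_j\Bigr),
\]
and the second term can be rewritten through $T_j$ as $\sum_i\lambda_{ji}(\mathbf{q}_i-T_j\mathbf{p}_i)-(\mathbf{q}_j-T_j\mathbf{p}_j)$. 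This is the zero-sum mechanism behind Lemma~\ref{lem:affine-relation}. The structure of the argument is thus to compare $T$ and $T_j$ through their linear parts, using that both nearly map the same full-dimensional simplex onto the same targets.

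\textbf{Main obstacle: getting the constant $2\sqrt d$.} This is the step I expect to be hardest. The crude triangle inequality gives only a constant of order $1+2\sum_i|\lambda_{ji}|$, which grows linearly in $d$. I would instead bound the difference of linear parts $\mathbf{R}-\mathbf{R}_j$ acting on the edge vectors $\mathbf{p}_j-\mathbf{p}_{s_0}$. The quantity to control is $\|(\mathbf{R}-\mathbf{R}_j)\mathbf{x}\|$ for $\mathbf{x}=\sum_i\mu_i(\mathbf{p}_i-\mathbf{p}_{s_0})$ with $|\mu_i|\le1$. Each edge $\mathbf{p}_i-\mathbf{p}_{s_0}$ is moved by at most $2\tau$ by $\mathbf{R}-\mathbf{R}_j$, since both transformations fit the simplex within $\tau$.

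My first attempt would be a Cauchy--Schwarz/Gram-matrix argument in barycentric coordinates. The idea is to measure the errors in the norm induced by the simplex edges, so that a sum of $d$ terms with $|\mu_i|\le1$ costs a factor $\sqrt d$ rather than $d$. This should give $\|T(\mathbf{p}_j)-\mathbf{q}_j\|\le2\sqrt d\,\tau$, and by \eqref{eq:rho} it yields $\rho(\mathcal{C})\le2\sqrt d\,\rho_{d+2}(\mathcal{C})$. If this exact Cauchy--Schwarz step fails, I would choose $T$ as a different fit of $\mathcal{S}$, for example the orthogonal Procrustes solution after centering at the simplex centroid, to symmetrize the error terms.

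\textbf{Degenerate case.} Finally, if the source points span only an affine subspace of dimension $m<d$, I would run the same argument with an $(m+1)$-vertex maximal-volume simplex. The targets need not lie in that subspace, but the construction only uses $(m+2)$-subsets, which are covered by $\rho_{d+2}$. The resulting constant $2\sqrt m$ is at most $2\sqrt d$.
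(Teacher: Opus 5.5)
\textbf{There is a genuine gap: the step that is supposed to produce the constant $2\sqrt d$ is unproved, and it is false for the transformation you fix.}

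What your argument does establish is the crude estimate. From $\|(T-T_j)(\mathbf p_i)\|\le2\tau$ on $\mathcal S$ you get $\|T(\mathbf p_j)-\mathbf q_j\|\le(1+2\sum_i|\lambda_{ji}|)\tau\le(2d+3)\tau$. This is the same anchor argument the paper uses for Proposition~\ref{prop:proper-higher-order}, and it gives only a linear constant. (Also, an edge $\mathbf p_i-\mathbf p_{s_0}$ can be moved by $4\tau$, not $2\tau$, by $\mathbf R-\mathbf R_j$.) No Cauchy--Schwarz refinement can improve this while $T$ depends on $\mathcal S$ alone.

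Here is a counterexample. Take $d=2$, $\tau=1$, and $L$ large. Let $\mathcal S$ have sources $(0,0),(L,0),(L/2,2)$ and targets $(0,1),(L,1),(L/2,1)$, and add $\mathbf p=(L/2,-2)$ with $\mathbf q=(L/2,-3)$.
\begin{itemize}
\item The identity has all four residuals equal to $1$, so $\rho_4(\mathcal C)=\rho(\mathcal C)=1$.
\item The anchor targets are collinear and the anchor triangle has width $2$, so $\rho(\mathcal S)=1$.
\item Hence the reflection $(x,y)\mapsto(x,2-y)$ is an optimal fit of $\mathcal S$, yet its residual at $\mathbf p$ is $7$.
\end{itemize}
This example survives the natural ways of removing non-uniqueness. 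Lowering the third anchor target to $(L/2,1-\epsilon)$ makes a reflection both the unique minimax fit and the least-squares Procrustes fit of $\mathcal S$. Raising $\mathbf p$ and $\mathbf q$ by a small $\delta$ makes $\mathcal S$ the unique maximal-area triangle. The residual ratio then stays above $6>2\sqrt2$ (about $7$ for the minimax fit, $20/3$ for the least-squares fit). So the crude constant is essentially attained, and your fallback of symmetrizing the fit of $\mathcal S$ does not help. The anchors alone can select the wrong orientation, which is exactly the mechanism behind Proposition~\ref{prop:lower-order-obstruction}.

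The missing idea is to never commit to a transformation determined by a few correspondences. The paper argues in three steps:
\begin{enumerate}
\item It bounds $\rho$ by the weighted least-squares dual, $\rho\le\sqrt2\,\underline\rho$ (Lemma~\ref{lem:weighted-fitting}), using a minimax argument and averaging of optimal orthogonal fits.
\item It writes each $E(\mathbf w)$ as at most $2d$ squared length differences of zero-sum combinations $\boldsymbol\alpha^{(j)}=\mathbf w\odot\mathbf v_j$ (Lemma~\ref{lem:few-vector-representation}).
\item It bounds each such difference by $\rho_{d+2}(\mathcal C)\sum_i|\alpha^{(j)}_i|$, using a basic optimal solution of a linear program supported on at most $d+2$ correspondences (Lemma~\ref{lem:local-global-affine}).
\end{enumerate}
The Cauchy--Schwarz saving you were looking for appears in step 3. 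There $\sum_i|w_i(\mathbf v_j)_i|\le\|\mathbf w\|_2\|\mathbf v_j\|_2=1$, so $E(\mathbf w)\le2d\,\rho_{d+2}(\mathcal C)^2$. Combining with step 1 gives $\rho(\mathcal C)\le\sqrt2\cdot\sqrt{2d}\,\rho_{d+2}(\mathcal C)=2\sqrt d\,\rho_{d+2}(\mathcal C)$. As written, your proposal proves only $\rho(\mathcal C)\le(2d+3)\rho_{d+2}(\mathcal C)$. That shows group size $d+2$ suffices for some dimension-dependent constant, but not the stated one.
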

In 3D, if every subset of up to five correspondences can be aligned within $\tau$, possibly by different transformations, then one rigid transformation aligns the entire set within $2\sqrt3\,\tau$.

The remainder of this subsection develops the proof of Theorem~\ref{thm:higher-order}.
We first relate the alignment error to weighted least-squares fitting, then show how local consistency bounds these weighted errors.
Proofs of the intermediate lemmas are given in the Appendix.%

For weights $\mathbf w$ with $\|\mathbf w\|_2=1$, define the optimal weighted least-squares error
\begin{equation}
E(\mathbf w)=\min_{T\in\mathrm E(d)}
\sum_{i\in\mathcal C}w_i^2
\|T(\mathbf p_i)-\mathbf q_i\|^2.
\label{eq:weighted-fit}
\end{equation}
This is a weighted orthogonal Procrustes problem, solvable using singular value decomposition~\cite{arun1987least}.
Maximizing $E(\mathbf w)$ over the weights gives the Lagrangian dual problem of $\rho(\mathcal C)^2$
\begin{equation}
\underline{\rho}(\mathcal C)^2
=\max_{\|\mathbf w\|_2=1}E(\mathbf w).
\label{eq:dual-alignment error}
\end{equation}

\begin{lemma}[Duality gap bound]
\label{lem:weighted-fitting}
For a correspondence set $\mathcal{C}$,
\begin{equation}
\underline{\rho}(\mathcal C)
\le \rho(\mathcal C)
\le
\sqrt{2}\underline{\rho}(\mathcal C).
\label{eq:weighted-fitting-bound}
\end{equation}
\end{lemma}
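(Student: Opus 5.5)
The lower bound $\underline{\rho}(\mathcal C)\le\rho(\mathcal C)$ is weak duality. For any $T\in\mathrm E(d)$ and any $\mathbf w$ with $\|\mathbf w\|_2=1$, the weights $w_i^2$ form a probability vector, so
\[
\sum_i w_i^2\|T(\mathbf p_i)-\mathbf q_i\|^2\le\max_i\|T(\mathbf p_i)-\mathbf q_i\|^2 .
\]
Minimizing over $T$ gives $E(\mathbf w)\le\rho(\mathcal C)^2$, and maximizing over $\mathbf w$ gives the claim.

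For the upper bound I will pass through a convex relaxation. Write $\lambda_i=w_i^2$ for a point of the simplex $\Delta$, and relax $\mathbf R\in\mathrm O(d)$ to the operator-norm ball $\mathcal B=\{\mathbf M:\|\mathbf M\|_2\le1\}$. This ball is the convex hull of $\mathrm O(d)$. Each $f_i(\mathbf M,\mathbf t)=\|\mathbf M\mathbf p_i+\mathbf t-\mathbf q_i\|^2$ is convex in $(\mathbf M,\mathbf t)$, and $\sum_i\lambda_if_i$ is linear in $\lambda$. We may restrict $\mathbf t$ to a large compact ball. Sion's minimax theorem then gives
\[
r^2:=\min_{\mathbf M\in\mathcal B,\mathbf t}\max_i f_i=\max_{\lambda\in\Delta}\min_{\mathbf M\in\mathcal B,\mathbf t}\sum_i\lambda_if_i .
\]
It remains to show two things: (i) $r^2\le\rho^2$ together with $\rho^2\le 2r^2$, and (ii) that the inner relaxed minimum is not much smaller than $E(\sqrt\lambda)$.

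The plan is to work with the centered weighted Procrustes form for fixed $\lambda$. Put $\tilde{\mathbf p}_i=\mathbf p_i-\bar{\mathbf p}_\lambda$, $\tilde{\mathbf q}_i=\mathbf q_i-\bar{\mathbf q}_\lambda$, $A=\sum\lambda_i\|\tilde{\mathbf p}_i\|^2$, $B=\sum\lambda_i\|\tilde{\mathbf q}_i\|^2$, and $\mathbf N=\sum\lambda_i\tilde{\mathbf q}_i\tilde{\mathbf p}_i^\top$. Then
\[
E(\sqrt\lambda)=A+B-2\|\mathbf N\|_*,
\]
while the relaxed inner problem is
\[
\min_{\mathbf M\in\mathcal B}\; B-2\operatorname{tr}(\mathbf M^\top\mathbf N)+\sum\lambda_i\|\mathbf M\tilde{\mathbf p}_i\|^2 .
\]
A direct pointwise comparison $E(\sqrt\lambda)\le2\cdot(\text{relaxed})$ fails already in $d=1$. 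So I would not compare the two problems value by value. Instead I will round the relaxed primal optimum $(\bar{\mathbf M},\bar{\mathbf t})$, whose point errors are all at most $r$.

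Take the polar decomposition $\bar{\mathbf M}=\mathbf U\mathbf P$ with $0\preceq\mathbf P\preceq\mathbf I$, and set $T(\mathbf p)=\mathbf U\mathbf p+\mathbf t'$. The goal is a per-point bound $\|T(\mathbf p_i)-\mathbf q_i\|^2\le 2r^2$, using optimality of $\bar{\mathbf M}$ through its KKT conditions. The multipliers $\lambda^\star$ give a complementary slackness relation on the eigenspaces where $\mathbf P\prec\mathbf I$. In those directions the weighted source spread is, roughly, no larger than the residual mass.

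I expect this rounding step to be the main obstacle. The factor $\sqrt2$ should come from splitting each error into the relaxed residual plus $\mathbf U(\mathbf I-\mathbf P)\tilde{\mathbf p}_i$, and bounding the sum of squares via $(a+b)^2\le2(a^2+b^2)$ with orthogonality. If this fails, my fallback is to go back to the $\lambda$-side. I would check whether the $\lambda^\star$ attaining the dual has few support points. Carathéodory gives at most $d(d+1)/2+d+1$. On such small supports $E$ can be compared with a two-point-like analysis, as in the exact identity $\rho(\{i,j\})=\delta_{ij}/2$.
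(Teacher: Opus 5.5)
\textbf{There is a genuine gap in the upper bound.} Your lower bound is correct. The problem is the relaxation: it is not within a factor of $2$ of $\rho^2$, so the inequality $\rho^2\le2r^2$ that you hope to reach by rounding is false.

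Relaxing $\mathbf R$ to a contraction $\mathbf M$ \emph{inside} the squared norm lets $\mathbf M$ shrink the source cloud. Your relaxed objective contains $\|\mathbf M\tilde{\mathbf p}_i\|^2$ where the rigid problem has $\|\tilde{\mathbf p}_i\|^2$. A one-dimensional example shows the failure: take $p_1=-1$, $p_2=1$, $q_1=q_2=0$. Then $\mathbf M=0$, $\mathbf t=0$ gives $r=0$, while $\rho=\underline\rho=1$.

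Sion makes $r^2$ equal to the relaxed dual value. So this is the same $d=1$ failure you noticed for the value-by-value comparison, now at the primal optimum. No KKT or polar-rounding argument can turn a relaxed fit of value $0$ into a rigid fit of error $0$. Here $\bar{\mathbf M}=0$, whose polar factor is arbitrary. Your step (ii) is also unnecessary: $\mathrm O(d)\subset\mathcal B$ already gives $r\le\underline\rho$. Everything therefore rests on comparing $\rho$ with $r$, and that comparison fails. The Carath\'eodory fallback gives no mechanism for the comparison either.

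\textbf{The missing idea is to convexify by randomizing over $\mathrm O(d)$ rather than relaxing inside the square.} For a random $\mathbf U\in\mathrm O(d)$,
$\mathbb E\|\mathbf U\mathbf x-\mathbf y\|^2=\|\mathbf x\|^2+\|\mathbf y\|^2-2\mathbf y^\top(\mathbb E\mathbf U)\mathbf x$.
This keeps the full $\|\mathbf x\|^2$ and is linear in the distribution.

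The paper applies the minimax theorem to mixed strategies over compactified transformations against weights $\lambda_i=w_i^2$. This yields a random fit with $\mathbb E\|T(\mathbf p_i)-\mathbf q_i\|^2\le\underline\rho^2$ for every $i$. For the optimal weights $\boldsymbol\lambda^*$, every fit has weighted error at least $\underline\rho^2$, while the mean is at most $\underline\rho^2$. Hence the distribution is supported on weighted minimizers. These all align the weighted centroids, so $T(\mathbf p)=\mathbf U(\mathbf p-\bar{\mathbf p})+\bar{\mathbf q}$.

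Your polar-rounding instinct is then exactly right, applied to $\mathbf N=\mathbb E\mathbf U=\mathbf A\boldsymbol\Sigma\mathbf B^\top$. Set $\mathbf R=\mathbf A\mathbf B^\top$. The singular values of $\mathbf N$ lie in $[0,1]$, so $\|2\mathbf N-\mathbf R\|_2\le1$. Hence $\mathbf y^\top(2\mathbf N-\mathbf R)\mathbf x\le(\|\mathbf x\|^2+\|\mathbf y\|^2)/2$. After expanding both sides, this is equivalent to $\tfrac12\|\mathbf R\mathbf x-\mathbf y\|^2\le\mathbb E\|\mathbf U\mathbf x-\mathbf y\|^2$. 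The factor $\sqrt2$ comes from this inequality, not from $(a+b)^2\le2(a^2+b^2)$.
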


To bound $E(\mathbf w)$ using local consistency, we return to the zero-sum point combinations from Lemma~\ref{lem:affine-relation}, such as vectors between midpoints.
The following lemma expresses the weighted fitting error as a sum of squared length differences between corresponding source and target combinations.
\begin{lemma}[Weighted error as length differences]
\label{lem:few-vector-representation}
For every $\mathbf w$ with $\|\mathbf w\|_2=1$, there exist orthonormal vectors $\mathbf v_1,\ldots,\mathbf v_r$, with $r\le2d$, such that, setting $\boldsymbol\alpha^{(j)}=\mathbf w\odot\mathbf v_j$,
\begin{align}
E(\mathbf w)&=\sum_{j=1}^r
\bigl(\|\textstyle\sum_i\alpha_i^{(j)}\mathbf p_i\|
-\|\textstyle\sum_i\alpha_i^{(j)}\mathbf q_i\|\bigr)^2,
\label{eq:few-vector-representation}\\
\mathbf v_j^\top\mathbf w&=0,\qquad j=1,\ldots,r.
\label{eq:few-vector-orthogonality}
\end{align}
\end{lemma}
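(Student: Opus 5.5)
The plan is to reduce $E(\mathbf w)$ to an unweighted, centered Procrustes problem in the $n$-dimensional coefficient space. Then the vectors $\mathbf v_j$ are chosen as a basis of a subspace of dimension at most $2d$ on which the optimal rotated source and the target "point the same way".

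\emph{Step 1 (eliminate translation).} Let $\mathbf W=\mathrm{diag}(\mathbf w)$ and let $\mathbf P,\mathbf Q$ be the $d\times n$ matrices of source and target points. For fixed $\mathbf R$, the optimal translation is the weighted mean $\mathbf t=\sum_i w_i^2(\mathbf q_i-\mathbf R\mathbf p_i)$, using $\sum_i w_i^2=1$. With the projector $\boldsymbol\Pi=\mathbf I-\mathbf w\mathbf w^\top$, set $\mathbf A=\mathbf P\mathbf W\boldsymbol\Pi$ and $\mathbf B=\mathbf Q\mathbf W\boldsymbol\Pi$. Their $i$-th columns are $w_i(\mathbf p_i-\bar{\mathbf p})$ and $w_i(\mathbf q_i-\bar{\mathbf q})$, so
\[
E(\mathbf w)=\min_{\mathbf R\in\mathrm O(d)}\|\mathbf R\mathbf A-\mathbf B\|_F^2 .
\]
The key observation is that for any $\mathbf v\perp\mathbf w$,
\[
\mathbf A\mathbf v=\textstyle\sum_i (w_iv_i)\mathbf p_i-\bar{\mathbf p}\,(\mathbf w^\top\mathbf v)=\sum_i\alpha_i\mathbf p_i,
\]
with $\boldsymbol\alpha=\mathbf w\odot\mathbf v$ and $\sum_i\alpha_i=\mathbf w^\top\mathbf v=0$. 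The same holds for $\mathbf B\mathbf v$. So every $\mathbf v$ orthogonal to $\mathbf w$ produces exactly the zero-sum combinations of Lemma~\ref{lem:affine-relation}.

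\emph{Step 2 (choose the subspace).} Let $V$ be the sum of the row spaces of $\mathbf A$ and $\mathbf B$. Since both matrices have rank at most $d$ and their rows lie in $\mathbf w^\perp$, we get $\dim V=r\le2d$ and $V\perp\mathbf w$, which gives \eqref{eq:few-vector-orthogonality} for any orthonormal basis of $V$. For the optimal $\mathbf R$ and any orthonormal basis $\mathbf v_1,\ldots,\mathbf v_r$ of $V$,
\[
E(\mathbf w)=\sum_j\|(\mathbf R\mathbf A-\mathbf B)\mathbf v_j\|^2\ge\sum_j\bigl(\|\mathbf A\mathbf v_j\|-\|\mathbf B\mathbf v_j\|\bigr)^2 ,
\]
because rows of $\mathbf A,\mathbf B$ vanish on $V^\perp$ and $\|\mathbf R\mathbf A\mathbf v\|=\|\mathbf A\mathbf v\|$. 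Equality holds in a given term iff $\mathbf R\mathbf A\mathbf v_j$ and $\mathbf B\mathbf v_j$ are nonnegatively parallel. The lemma therefore reduces to finding a basis of $V$ with this alignment property.

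\emph{Step 3 (alignment; the main obstacle).} Write $\mathbf A'=\mathbf R\mathbf A$ for the optimal $\mathbf R$. The first-order optimality condition for orthogonal Procrustes says $\mathbf S=\mathbf B\mathbf A'^\top$ is symmetric positive semidefinite. I would take an eigenbasis $\mathbf u_1,\ldots,\mathbf u_d$ of $\mathbf S$ with eigenvalues $\lambda_k\ge0$, and set $\mathbf x_k=\mathbf A'^\top\mathbf u_k$ and $\mathbf y_k=\mathbf B^\top\mathbf u_k$ in $V$. These satisfy $\mathbf x_k^\top\mathbf y_l=\lambda_k\delta_{kl}$ and $E=\sum_k\|\mathbf x_k-\mathbf y_k\|^2$.

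What I would try first is to arrange a block structure. If the planes $\mathrm{span}(\mathbf x_k,\mathbf y_k)$ were mutually orthogonal, then in each plane one can pick an orthonormal pair, symmetric about the bisector of $\mathbf x_k,\mathbf y_k$, so that the squared-length-difference identity holds exactly, giving $r\le2d$ vectors.

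In general these planes are not orthogonal, because only the cross inner products $\mathbf x_k^\top\mathbf y_l$ vanish. So the fallback is a variational argument. Minimize $\sum_j\|(\mathbf A'-\mathbf B)\mathbf v_j\|^2-(\|\mathbf A'\mathbf v_j\|-\|\mathbf B\mathbf v_j\|)^2$ over orthonormal bases of $V$; this quantity is nonnegative by Step 2. At a minimizer, combine stationarity with the freedom to further rotate $\mathbf R$ (which can only decrease $E$). This should show that each $\mathbf A'\mathbf v_j$ can be taken parallel to $\mathbf B\mathbf v_j$, giving equality in Step 2 and hence \eqref{eq:few-vector-representation}.

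I expect this alignment step to be the crux of the proof. The bookkeeping in Steps 1 and 2 is routine.
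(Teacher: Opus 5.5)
Your Steps 1 and 2 match the paper's setup: your $\mathbf A,\mathbf B$ are its $\mathbf F,\mathbf G$, and your $V$ is its $S=(\ker\mathbf F\cap\ker\mathbf G)^\perp$. The gap is Step 3, which carries the whole content of the lemma and which you leave at ``this should show''. For every orthonormal basis of $V$ we have $\sum_j(\|\mathbf A\mathbf v_j\|^2+\|\mathbf B\mathbf v_j\|^2)=\|\mathbf A\|_F^2+\|\mathbf B\|_F^2$. Also $E(\mathbf w)=\|\mathbf A\|_F^2+\|\mathbf B\|_F^2-2\|\mathbf B\mathbf A^\top\|_*$, where $\|\cdot\|_*$ is the nuclear norm. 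So what you must prove is
\[
\min_{\{\mathbf v_j\}}\ \sum_{j=1}^r\|\mathbf A\mathbf v_j\|\,\|\mathbf B\mathbf v_j\|=\|\mathbf B\mathbf A^\top\|_*,
\]
with the minimum taken over orthonormal bases of $V$. Your Step 2 only gives ``$\ge$''. This equality is the Fuchs--Caves identity, which the paper invokes: the fidelity of the Gram matrices $\mathbf A^\top\mathbf A$ and $\mathbf B^\top\mathbf B$ is attained by an optimal orthonormal measurement basis. Your sketch does not reach it, for four reasons:
\begin{itemize}
\item Re-rotating $\mathbf R$ buys nothing, since $\mathbf R$ is already optimal.
\item First-order conditions are vacuous between basis vectors whose ratios $\|\mathbf B\mathbf v_j\|/\|\mathbf A\mathbf v_j\|$ coincide.
\item The objective is not differentiable where $\mathbf A\mathbf v_j$ or $\mathbf B\mathbf v_j$ vanishes.
\item You never say how an aligning orthogonal matrix is produced.
\end{itemize}

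The paper closes this step as follows. Suppose $\boldsymbol\rho=\mathbf A^\top\mathbf A|_V$ and $\boldsymbol\sigma=\mathbf B^\top\mathbf B|_V$ are positive definite. Let $\mathbf M=\boldsymbol\rho^{-1/2}(\boldsymbol\rho^{1/2}\boldsymbol\sigma\boldsymbol\rho^{1/2})^{1/2}\boldsymbol\rho^{-1/2}$, the positive definite solution of $\mathbf M\boldsymbol\rho\mathbf M=\boldsymbol\sigma$, and let $\mathbf v_j$ be an eigenbasis of $\mathbf M$ with eigenvalues $\mu_j>0$. Then $\langle\mathbf B\mathbf v_j,\mathbf B\mathbf v_k\rangle=\mu_j\mu_k\langle\mathbf A\mathbf v_j,\mathbf A\mathbf v_k\rangle$. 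Hence the map $\mu_j\mathbf A\mathbf v_j\mapsto\mathbf B\mathbf v_j$ extends to some $\mathbf R\in\mathrm O(d)$, which is exactly your alignment. Moreover $\sum_j\|\mathbf A\mathbf v_j\|\|\mathbf B\mathbf v_j\|=\operatorname{tr}(\mathbf M\boldsymbol\rho)=\operatorname{tr}\bigl[(\boldsymbol\rho^{1/2}\boldsymbol\sigma\boldsymbol\rho^{1/2})^{1/2}\bigr]=\|\mathbf B\mathbf A^\top\|_*$. However, $\boldsymbol\rho$ is singular on $V$ as soon as $r>\operatorname{rank}\mathbf A$, for instance whenever $r>d$. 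The paper therefore adds $\eta\mathbf I_V$ to both matrices and passes to a convergent subsequence of bases as $\eta\downarrow0$.

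Your variational route can also be completed. At a minimizing basis, take $\mathbf v_j,\mathbf v_k$ with nonzero images and distinct ratios $\lambda_j=\|\mathbf B\mathbf v_j\|/\|\mathbf A\mathbf v_j\|\ne\lambda_k$. The derivative of the objective under the plane rotation of $(\mathbf v_j,\mathbf v_k)$ is $(\lambda_j-\lambda_k)\bigl(\langle\mathbf A\mathbf v_j,\mathbf A\mathbf v_k\rangle-\langle\mathbf B\mathbf v_j,\mathbf B\mathbf v_k\rangle/(\lambda_j\lambda_k)\bigr)$. Setting it to zero gives $\langle\mathbf B\mathbf v_j,\mathbf B\mathbf v_k\rangle=\lambda_j\lambda_k\langle\mathbf A\mathbf v_j,\mathbf A\mathbf v_k\rangle$. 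For vectors sharing a ratio $\lambda$, AM--GM shows the current basis maximizes the objective among bases of their span. Minimality then forces $\|\mathbf B\mathbf u\|=\lambda\|\mathbf A\mathbf u\|$ throughout that span, which by polarization gives the same Gram matching there. Vectors with a zero image impose no constraint, so the Gram matching yields the aligning $\mathbf R$. One of these arguments is needed, and neither is in your proposal.
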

Here $\odot$ denotes the elementwise product.

We next bound these length differences using local consistency, extending Lemma~\ref{lem:affine-relation}.
\begin{lemma}[Local zero-sum combination bound]
\label{lem:local-global-affine}
For a correspondence set $\mathcal{C}$ and coefficients satisfying ${\sum_{i\in\mathcal{C}}\alpha_i=0}$,
\begin{equation}
\bigl|
\|\textstyle\sum_i\alpha_i \mathbf{p}_i\|
-
\|\textstyle\sum_i\alpha_i \mathbf{q}_i\|
\bigr|
\le
\rho_{d+2}(\mathcal{C})\sum_i|\alpha_i|.
\label{eq:signed-comparison}
\end{equation}
\end{lemma}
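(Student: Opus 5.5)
The plan is to reduce the global combination $\boldsymbol\alpha$ to a combination supported on at most $d+2$ correspondences. The reduction must not decrease the length discrepancy and must not increase $\sum_i|\alpha_i|$. We then apply Lemma~\ref{lem:affine-relation} to that small subset. Write $\mathbf u=\sum_i\alpha_i\mathbf p_i$, $\mathbf v=\sum_i\alpha_i\mathbf q_i$ and $S=\sum_i|\alpha_i|$.

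\emph{Symmetry.} The alignment error is symmetric under exchanging sources and targets. If $T(\mathbf p)=\mathbf R\mathbf p+\mathbf t$ aligns within $\tau$, then $T^{-1}$ aligns the targets to the sources within $\tau$, because $\|T^{-1}(\mathbf q_i)-\mathbf p_i\|=\|\mathbf q_i-T(\mathbf p_i)\|$. Hence $\rho(\mathcal I)$, and therefore $\rho_{d+2}(\mathcal C)$, is unchanged by the swap. It thus suffices to treat the case $\|\mathbf u\|\ge\|\mathbf v\|$ and prove $\|\mathbf u\|-\|\mathbf v\|\le\rho_{d+2}(\mathcal C)\,S$.

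\emph{Reduction via a polytope.} Split coefficients into nonnegative parts $\boldsymbol\beta=\boldsymbol\beta^+-\boldsymbol\beta^-$ with $\boldsymbol\beta^\pm\ge0$. Consider the polytope $\mathcal K$ of pairs $(\boldsymbol\beta^+,\boldsymbol\beta^-)\in\mathbb R^{2n}_{\ge0}$ satisfying the following $d+2$ linear equalities:
\begin{equation*}
\sum_i(\beta_i^+-\beta_i^-)\mathbf q_i=\mathbf v,\qquad
\sum_i(\beta_i^+-\beta_i^-)=0,\qquad
\sum_i(\beta_i^++\beta_i^-)=S.
\end{equation*}
The vector $(\boldsymbol\alpha^+,\boldsymbol\alpha^-)$ lies in $\mathcal K$, so $\mathcal K$ is nonempty. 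The last equality together with nonnegativity makes $\mathcal K$ compact.

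The function $f(\boldsymbol\beta^+,\boldsymbol\beta^-)=\|\sum_i(\beta_i^+-\beta_i^-)\mathbf p_i\|$ is convex and continuous. Its maximum over $\mathcal K$ is therefore attained at a vertex, and that maximum is at least $f(\boldsymbol\alpha^+,\boldsymbol\alpha^-)=\|\mathbf u\|$. By standard LP theory, a vertex of $\{\mathbf x\ge0: \mathbf A\mathbf x=\mathbf b\}$ with $d+2$ rows has at most $d+2$ nonzero entries. The resulting $\boldsymbol\beta=\boldsymbol\beta^+-\boldsymbol\beta^-$ is therefore supported on an index set $\mathcal I\subseteq\mathcal C$ with $|\mathcal I|\le d+2$. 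It also satisfies $\sum_i\beta_i=0$, $\sum_i\beta_i\mathbf q_i=\mathbf v$ and $\sum_i|\beta_i|\le S$. The last inequality may be strict if both $\beta_i^\pm>0$, which only helps.

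\emph{Conclusion.} Combining the vertex bound with Lemma~\ref{lem:affine-relation} applied to $\mathcal I$ gives
\begin{equation*}
\|\mathbf u\|-\|\mathbf v\|
\le\Bigl\|\sum_{i\in\mathcal I}\beta_i\mathbf p_i\Bigr\|-\Bigl\|\sum_{i\in\mathcal I}\beta_i\mathbf q_i\Bigr\|
\le\rho(\mathcal I)\sum_i|\beta_i|
\le\rho_{d+2}(\mathcal C)\,S.
\end{equation*}
Together with the symmetric case, this gives \eqref{eq:signed-comparison}.

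The main obstacle is the counting. Naively preserving both $\mathbf u$ and $\mathbf v$ would need $2d+1$ constraints and give support of size about $2d+1$. The key step is to fix only the target side exactly and exploit convexity of the source-side norm, so that maximizing it pushes to a vertex. This reduces the count to $d$ vector constraints plus the zero-sum and $\ell_1$ constraints, which is exactly $d+2$ and matches Proposition~\ref{prop:lower-order-obstruction}. Care is needed that the $\ell_1$ normalization is imposed as an equality on the split variables, so the vertex-support count applies, while still yielding $\sum_i|\beta_i|\le S$.
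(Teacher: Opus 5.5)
Your proof is correct and is essentially the paper's argument. Both fix one side's combination exactly together with the zero-sum and $\ell_1$ constraints, use that a vertex (basic solution) of the resulting $(d+2)$-row system has at most $d+2$ nonzeros, apply Lemma~\ref{lem:affine-relation} on that support, and finish by source--target symmetry. The differences are cosmetic: the paper fixes the source side and linearizes the target norm with a unit direction (an LP per direction, with a slack variable for $\sum_i|\beta_i|\le L$), whereas you fix the target side and maximize the convex source norm directly over the polytope, imposing the $\ell_1$ constraint as an equality.
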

Although the coefficients can involve the entire set, the bound depends on at most $d+2$ correspondences at a time.

\begin{proof}[Proof of Theorem~\ref{thm:higher-order}]
Lemma~\ref{lem:local-global-affine} bounds each length difference in Lemma~\ref{lem:few-vector-representation} by $\rho_{d+2}(\mathcal C)$.
With at most $2d$ squared terms, $E(\mathbf w)\le2d\rho_{d+2}(\mathcal C)^2$ for every $\mathbf w$.
Maximizing over $\mathbf w$ gives $\underline{\rho}(\mathcal C)^2\le2d\rho_{d+2}(\mathcal C)^2$, and Lemma~\ref{lem:weighted-fitting} gives~\eqref{eq:full-local-global}.
\end{proof}

\subsection{Proper rigid motions}
\label{ssec:proper-motion}

So far, allowing reflections has strengthened the counterexamples by allowing a larger transformation class.
For group-$k$ consistency, however, proper rigid motions behave differently.
Restricting orientation changes the first $k$ that guarantees a bounded alignment error.
We write $\rho^+(\mathcal{C})$ for~\eqref{eq:rho} with the optimization restricted to $\mathrm{SE}(d)$, and define $\rho_k^+(\mathcal{C})$ analogously.

\begin{lemma}[Local orientation equivalence]
\label{lem:proper-local-equivalence}
For a correspondence set $\mathcal{I}$ with $|\mathcal{I}|\le d$, $\rho^+(\mathcal{I})=\rho(\mathcal{I})$.
Consequently, $\rho_k^+(\mathcal{C})=\rho_k(\mathcal{C})$ for every $k \le d$.
\end{lemma}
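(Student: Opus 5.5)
Since $\mathrm{SE}(d)\subseteq\mathrm E(d)$, we always have $\rho(\mathcal I)\le\rho^+(\mathcal I)$. The work is the reverse inequality. My plan is to show that any improper transformation achieving a given error on at most $d$ correspondences can be replaced by a proper one with the same point errors.

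Let $T(\mathbf p)=\mathbf R\mathbf p+\mathbf t$ with $\mathbf R\in\mathrm O(d)$ attain $\rho(\mathcal I)$; the minimum exists by compactness of $\mathrm O(d)$ after bounding $\mathbf t$. If $\det\mathbf R=1$ we are done. Otherwise, the source points $\{\mathbf p_i\}_{i\in\mathcal I}$, at most $d$ of them, span an affine subspace $A$ of dimension at most $d-1$. Hence there is a hyperplane $H\supseteq A$. Let $S_H$ be the reflection through $H$, an element of $\mathrm E(d)$ with improper linear part, and define $T'=T\circ S_H$. Since $S_H$ fixes every point of $H$, we get $T'(\mathbf p_i)=T(\mathbf p_i)$ for all $i\in\mathcal I$. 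The linear part of $T'$ is $\mathbf R$ times a Householder reflection, so its determinant is $+1$ and $T'\in\mathrm{SE}(d)$. Thus $\rho^+(\mathcal I)\le\max_i\|T'(\mathbf p_i)-\mathbf q_i\|=\rho(\mathcal I)$.

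The consequence for $\rho_k$ follows directly. For $k\le d$, every subset appearing in the maximum~\eqref{eq:full-hierarchy} has size at most $k\le d$. Equality therefore holds termwise, and the two maxima coincide.

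The only mildly delicate points are choosing $H$ when $A$ has dimension below $d-1$, where any hyperplane containing $A$ works, and the case $\mathcal I=\emptyset$, which is trivial. A reader might ask about attainment of the minimum, but it is not really needed: applying the argument to any near-optimal $T$ and taking the infimum gives the same conclusion. I do not expect a real obstacle here.
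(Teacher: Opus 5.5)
Your proof is correct and is the same as the paper's: the at most $d$ source points lie in an affine hyperplane, and composing an optimal improper fit with the reflection across that hyperplane gives a proper motion with identical residuals, so the claim for $\rho_k$ follows term by term. Your side remarks on attainment of the minimum and on lower-dimensional affine spans are harmless additions the paper leaves implicit.
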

Together with Proposition~\ref{prop:lower-order-obstruction}, this rules out an alignment error bound for $k\le d$.
By Lemma~\ref{lem:proper-local-equivalence}, Proposition~\ref{prop:lower-order-obstruction} gives
\begin{equation}
\lim_{\ell\to\infty}\rho_d^+(\mathcal{C}_\ell)=0,
\qquad
\rho^+(\mathcal{C}_\ell)\ge\rho(\mathcal{C}_\ell)\ge1
\quad\forall\ell>0.
\label{eq:proper-lower-order-obstruction}
\end{equation}

At $k=d+1$, the orientation restriction becomes useful.
Comparing two proper rigid motions on at most $d$ well-chosen source points is enough to bound their disagreement over the entire cloud.
\begin{lemma}[Anchor control]
\label{lem:anchor-control}
For every set of source points $\mathcal{P}\subset\mathbb{R}^d$, there is a subset $\mathcal{A}\subseteq \mathcal{P}$ with $|\mathcal{A}|\le d$ such that any two transformations $T,U \in \mathrm{SE}(d)$ satisfy
\begin{equation}
\max_{\mathbf{p}\in \mathcal{P}}\|T(\mathbf{p})-U(\mathbf{p})\| \le d\max_{\mathbf{a}\in \mathcal{A}}\|T(\mathbf{a})-U(\mathbf{a})\|.
\label{eq:anchor-control}
\end{equation}
\end{lemma}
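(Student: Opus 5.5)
Set $S=U^{-1}\circ T\in\mathrm{SE}(d)$. Since $U$ is an isometry, $\|T(\mathbf p)-U(\mathbf p)\|=\|S(\mathbf p)-\mathbf p\|$, so the claim is about a single proper motion. Write $S(\mathbf x)=\mathbf R\mathbf x+\mathbf t$ with $\mathbf R\in\mathrm{SO}(d)$. The displacement $f(\mathbf x)=(\mathbf R-\mathbf I)\mathbf x+\mathbf t$ is then an affine map with linear part $\mathbf M=\mathbf R-\mathbf I$. Let $\varepsilon=\max_{\mathbf a\in\mathcal A}\|f(\mathbf a)\|$. The goal is $\|f(\mathbf p)\|\le d\,\varepsilon$ for every $\mathbf p\in\mathcal P$, for a suitable $\mathcal A$ chosen from $\mathcal P$ alone, independently of $S$.

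\textbf{Choice of anchors.} Choose $\mathcal A=\{\mathbf a_1,\dots,\mathbf a_m\}$, $m\le d$, greedily in pivoted Gram--Schmidt fashion.
\begin{itemize}
\item Take $\mathbf a_1,\mathbf a_2$ to be a diametral pair of $\mathcal P$.
\item Take each subsequent $\mathbf a_{j+1}$ to be a point of $\mathcal P$ farthest from the affine hull $H_j$ of $\mathbf a_1,\dots,\mathbf a_j$. Stop after $d$ points, or earlier if $\mathcal P\subset H_j$.
\end{itemize}
Let $h_j$ be the distance from $\mathbf a_{j+1}$ to $H_j$. Greedy maximality implies that every $\mathbf p\in\mathcal P$ lies within distance $h_j$ of $H_j$, and that $h_1\ge h_2\ge\cdots$. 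Write $\mathbf e_1,\dots,\mathbf e_{d-1}$ for the orthonormal Gram--Schmidt directions of $\mathbf a_2-\mathbf a_1,\dots,\mathbf a_d-\mathbf a_1$, and $\mathbf n$ for the remaining unit normal.

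\textbf{Step 1: control inside $H_{d}$.} Along each $\mathbf e_j$ the point $\mathbf a_{j+1}$ has height $h_j$ above $H_j$, and $f$ is affine. I will show inductively that $h_j\|\mathbf M\mathbf e_j\|$ is bounded by a small multiple of $\varepsilon$, using differences $f(\mathbf a_{j+1})-f(\text{foot point})$, where the foot point's displacement is already controlled by earlier anchors.

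For any $\mathbf p$, decompose $\mathbf p-\mathbf a_1=\sum_j c_j\mathbf e_j+s\,\mathbf n$. Greedy maximality gives $|c_j|\lesssim h_{j-1}$ in the appropriate triangular sense. This bounds the in-hyperplane part of $f(\mathbf p)$.

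The naive barycentric estimate only gives a factor $2d-1$. To get exactly $d$, I plan to use that $\mathbf M$ is not an arbitrary matrix but $\mathbf R-\mathbf I$, with $\mathbf R$ orthogonal. In particular $\|\mathbf M\mathbf x\|=2\sin(\theta/2)\|\mathbf x\|$-type identities hold on invariant planes, and $\mathbf M^\top\mathbf M=-(\mathbf M+\mathbf M^\top)$. This lets each new anchor contribute at most one extra $\varepsilon$.

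\textbf{Step 2: the normal direction, which I expect to be the main obstacle.} Only $d$ anchors are available, so their affine hull is a hyperplane. A reflection across that hyperplane fixes every anchor while moving off-hyperplane points arbitrarily far. This is exactly why properness is required and why Lemma~\ref{lem:proper-local-equivalence} stops at $k\le d$.

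The quantitative version I will try is as follows.
\begin{itemize}
\item $\mathbf R\mathbf n$ is a unit vector orthogonal to $\mathbf R\mathbf e_1,\dots,\mathbf R\mathbf e_{d-1}$.
\item Each $\mathbf R\mathbf e_j$ is within $\|\mathbf M\mathbf e_j\|$ of $\mathbf e_j$.
\item Hence $\mathbf R\mathbf n$ is close to $\pm\mathbf n$. Concretely, compute $\mathbf R\mathbf n$ as the generalized cross product (Hodge star) of $\mathbf R\mathbf e_1,\dots,\mathbf R\mathbf e_{d-1}$; this is valid precisely because $\det\mathbf R=+1$.
\item The sign is forced to be $+$ by $\det\mathbf R=1$.
\item Multilinearity then gives $\|\mathbf M\mathbf n\|\le\sum_j\|\mathbf M\mathbf e_j\|$.
\end{itemize}
Combined with $|s|\le h_{d-1}\le h_j$ from the greedy ordering, the normal contribution $|s|\,\|\mathbf M\mathbf n\|$ is bounded by $\sum_j h_j\|\mathbf M\mathbf e_j\|$, which Step 1 controls.

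The delicate part is bookkeeping the constants so that Steps 1 and 2 together give $d\varepsilon$ rather than $O(d^2)\varepsilon$. If the sharp constant resists, the fallback is to first prove the bound with a cruder dimension-only constant. I would then tighten it by choosing $\mathcal A$ to maximize the $(d-1)$-volume of its simplex instead of greedily, which gives $|\lambda_i|\le1$ for the barycentric coordinates of projected points.
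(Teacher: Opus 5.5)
Your Step~2 is sound, and it shows nicely where properness enters. For $\mathbf R\in\mathrm{SO}(d)$ the generalized cross product is equivariant, and telescoping gives $\|\mathbf M\mathbf n\|\le\sum_j\|\mathbf M\mathbf e_j\|$.

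The gap is the constant $d$, and the proposal contains no mechanism that produces it. Step~1 is a plan rather than an argument. With greedy anchors, the recursion available is $h_j\mathbf M\mathbf e_j=f(\mathbf a_{j+1})-f(\mathbf a_1)-\sum_{i<j}c_i\mathbf M\mathbf e_i$ with $|c_i|\le h_i$. So $b_j=h_j\|\mathbf M\mathbf e_j\|$ satisfies only $b_j\le2\varepsilon+\sum_{i<j}b_i$, i.e.\ $b_j\le2^j\varepsilon$. The claim that each anchor ``contributes at most one extra $\varepsilon$'' is never derived from $\mathbf M^\top\mathbf M=-(\mathbf M+\mathbf M^\top)$.

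More fundamentally, bounding the in-hyperplane part and the normal part $|s|\,\|\mathbf M\mathbf n\|$ separately and adding them cannot give $d$, even with maximum-volume anchors. Take $d=2$, a diametral pair at distance $D$, and $\mathbf p$ the apex of the equilateral triangle on it. The in-line part can be $\varepsilon$ (a translation), and $\|\mathbf M\mathbf n\|$ can be $2\varepsilon/D$ (a rotation about the midpoint). With $|s|=\sqrt3D/2$, the additive estimate is at best $(1+\sqrt3)\varepsilon$. The lemma claims $2\varepsilon$, and that ratio is approached at this point for small rotations, so there is no slack. Your fallback would therefore prove only a weaker lemma with a larger dimension-only constant, and hence only a weaker form of Proposition~\ref{prop:proper-higher-order}.

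The missing idea is a coupling that makes the in-hyperplane and normal contributions share a single budget per anchor. The paper proceeds as follows.
\begin{enumerate}
\item Take an orthogonal square root $\mathbf Q$ with $\mathbf Q^2=\mathbf R$, by halving the planar rotation angles; this is where properness is actually used. Replace $f$ by $\mathbf Q^\top f$. This has the same norms and a skew-symmetric linear part $\mathbf K=\mathbf Q-\mathbf Q^\top$.
\item Skew-symmetry then determines the normal derivative from in-hyperplane data: $\mathbf K\mathbf n=-\sum_j\mathbf g_j\,\mathbf n^\top f(\mathbf a_j)$. Here $\mathbf g_j$ are the in-hyperplane gradients of the barycentric coordinates $\beta_j$.
\item The vanishing trace of the tangential block gives $\sum_j\mathbf g_j^\top f(\mathbf a_j)=0$.
\item Together, for $\mathbf x=\mathbf v+h\mathbf n$, these give $f(\mathbf x)=\sum_j[\beta_j(\mathbf v)\mathbf I+h(\mathbf n\mathbf g_j^\top-\mathbf g_j\mathbf n^\top)]f(\mathbf a_j)$.
\item Each coefficient matrix has operator norm $\sqrt{\beta_j^2+h^2\|\mathbf g_j\|^2}$. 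This is exactly the ratio of $(d-1)$-volumes when $\mathbf a_j$ is replaced by $\mathbf x$. It is therefore at most $1$ when $\mathcal A$ spans a maximum-volume $(d-1)$-simplex, and summing over the $d$ anchors gives the factor $d$.
\end{enumerate}
In the plane this reduces to the complex identity $f(x)=(1-z)f(a_1)+zf(a_2)$ with $|z|,|1-z|\le1$. That coupling is precisely what your additive estimate throws away.
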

In 2D, we can choose a farthest pair of source points, and in 3D, the vertices of a maximum-area triangle suffice.
The proof is given in the Appendix.%
\begin{proposition}[Proper group-$k$ consistency]
\label{prop:proper-higher-order}
For every correspondence set $\mathcal{C}$ of points in $\mathbb{R}^d$, with $\tau=\rho_{d+1}^+(\mathcal{C})$
\begin{equation}
\rho^+(\mathcal{C})
\le
(2d+1)\tau.
\label{eq:proper-local-global}
\end{equation}
\end{proposition}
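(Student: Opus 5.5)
We prove Proposition~\ref{prop:proper-higher-order} by fitting one proper rigid motion to a small anchor set and using Lemma~\ref{lem:anchor-control} to show that it stays close to the locally optimal motion of every other correspondence. Let $\tau=\rho_{d+1}^+(\mathcal{C})$. Apply Lemma~\ref{lem:anchor-control} to the source points $\{\mathbf{p}_i\}_{i\in\mathcal{C}}$. This gives an anchor index set $\mathcal{A}\subseteq\mathcal{C}$ with $|\mathcal{A}|\le d$.

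Since $|\mathcal{A}|\le d\le d+1$, there is $T\in\mathrm{SE}(d)$ with $\|T(\mathbf{p}_a)-\mathbf{q}_a\|\le\tau$ for all $a\in\mathcal{A}$. Here we use that the minimum in $\rho^+$ is attained, by compactness of $\mathrm{SO}(d)$ and coercivity in $\mathbf{t}$. This $T$ is the candidate global transformation.

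Now fix any $i\in\mathcal{C}$. The set $\mathcal{I}_i=\mathcal{A}\cup\{i\}$ has at most $d+1$ elements, so there is $U_i\in\mathrm{SE}(d)$ aligning every correspondence of $\mathcal{I}_i$ within $\tau$.

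For each anchor $a$, the triangle inequality gives $\|T(\mathbf{p}_a)-U_i(\mathbf{p}_a)\|\le 2\tau$. Lemma~\ref{lem:anchor-control} then bounds the disagreement of $T$ and $U_i$ over the whole cloud. In particular,
\begin{equation*}
\|T(\mathbf{p}_i)-U_i(\mathbf{p}_i)\|\le d\cdot 2\tau .
\end{equation*}
Hence
\begin{equation*}
\|T(\mathbf{p}_i)-\mathbf{q}_i\|\le\|T(\mathbf{p}_i)-U_i(\mathbf{p}_i)\|+\|U_i(\mathbf{p}_i)-\mathbf{q}_i\|\le 2d\tau+\tau=(2d+1)\tau .
\end{equation*}
This holds for every $i$ with the same $T$, so $\rho^+(\mathcal{C})\le(2d+1)\tau$.

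The only substantive ingredient is Lemma~\ref{lem:anchor-control}, which we take as given. Its key property is that the anchor set depends only on the source cloud, not on the transformations. This is what lets one $T$ serve every $i$, and it is also why the restriction to $\mathrm{SE}(d)$ matters: a reflection could agree with $T$ on $d$ anchors and still differ far away. The remaining points are routine: the degenerate cases $|\mathcal{C}|\le d+1$, which are immediate, and the case where $i$ is itself an anchor, which is fine since then $\mathcal{I}_i=\mathcal{A}$.
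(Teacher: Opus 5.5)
Your proof is correct and follows the paper's argument exactly. You fit one proper motion to the anchor set from Lemma~\ref{lem:anchor-control}, compare it with a fit of $\mathcal{A}\cup\{i\}$ that disagrees with it by at most $2\tau$ on each anchor, and conclude $\|T(\mathbf{p}_i)-\mathbf{q}_i\|\le 2d\tau+\tau$, with your remarks on attainment of the minimum and on the degenerate cases being extra care that the paper leaves implicit.
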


\begin{proof}
Choose anchor indices $\mathcal{A}$ as in Lemma~\ref{lem:anchor-control}.
Let $T_0$ fit $\mathcal{A}$ within $\tau$, and let $T_i$ align $\mathcal{A}\cup\{i\}$ within $\tau$.
They differ by at most $2\tau$ on every anchor, so for all $i$
\begin{equation}
\|T_0(\mathbf{p}_i)-\mathbf{q}_i\|
\le\|T_0(\mathbf{p}_i)-T_i(\mathbf{p}_i)\|+\tau
\le(2d+1)\tau.
\label{eq:proper-fit-bound}
\end{equation}
\end{proof}

\section{Conclusion}
We have shown that maximum clique can differ arbitrarily from maximum consensus in both selected-set size and alignment error.
For fixed rotation, classical geometry provides a bound on alignment error and establishes the equivalence between maximum consensus and selecting a largest group-$k$ consistent set at $k=d+1$.
For full registration, our squared-distance bound accounts for the scale of each pair, while group-$k$ tests can remove dependence on point cloud scale and geometry altogether.
The smallest sufficient group sizes are $d+1$ for proper rigid motions and $d+2$ when reflections are allowed.
These results establish theoretical guarantees and limitations of pairwise and subset consistency, independent of any particular registration pipeline.

\bibliographystyle{IEEEtran}
\bibliography{references}

\begin{appendix}
\setcounter{equation}{0}
\renewcommand{\theequation}{A\arabic{equation}}

We present proofs of the results discussed in the main text.
All proofs have been formally verified using the Lean theorem prover~\cite{moura2021lean}.
The corresponding Lean formalization is available at \url{https://ariarobotics.github.io/mc-vs-mc-registration/}.

\begin{proof}[Proof of Lemma~\ref{lem:affine-relation}]
Choose a transformation with residuals $\mathbf e_i=\mathbf R\mathbf p_i+\mathbf t-\mathbf q_i$ satisfying $\|\mathbf e_i\|\le\rho(\mathcal{C})$.
Since $\sum_i\alpha_i=0$, we have $\mathbf{R}\sum_i\alpha_i\mathbf{p}_i-\sum_i\alpha_i\mathbf{q}_i=\sum_i\alpha_i\mathbf{e}_i$.
The triangle inequality gives~\eqref{eq:affine-relation}, since $\mathbf R$ preserves lengths.
\end{proof}

\begin{proof}[Proof of Proposition~\ref{prop:lower-order-obstruction}]
A fit of the replacement points also fits their centroid, the original $\mathbf p_4$, within the same tolerance.
Lemma~\ref{lem:affine-relation} therefore gives $\rho(\mathcal C_\ell)\ge1$.
If a $(d+1)$-point subset omits $\mathbf p_1$ or $\mathbf p_2$, apply the planar fit in the first two coordinates and leave the others unchanged.
The replacement points share their planar coordinates, so every residual is at most $\sqrt{\ell^2+4}-\ell\to0$.
Otherwise, the subset contains $\mathbf p_1,\mathbf p_2$ and $d-1$ points whose targets are unchanged.
Take the hyperplane through the origin and these fixed points.
Their last $d-1$ coordinates, divided by $\ell$, are $(1,\mathbf v_i)$, possibly with one vector replaced by their average $(1,\mathbf0)$.
The spanning and zero-sum conditions make $(1,\mathbf v_i)$ a basis; replacing one vector by their average preserves independence.
The first coordinates remain bounded, so this hyperplane tends to $x_1=0$ as $\ell$ grows.
Reflection in this hyperplane fixes those points and swaps $\mathbf p_1,\mathbf p_2$ with vanishing error for large $\ell$.
\end{proof}

\begin{proof}[Proof of Lemma~\ref{lem:weighted-fitting}]
For any transformation, the weighted mean squared error is at most its largest squared residual, giving $\underline\rho\le\rho$.
Put $b=\underline\rho(\mathcal C)^2$ and $\lambda_i=w_i^2$.
All vectors $\mathbf q_i-\mathbf R\mathbf p_i$ lie in one fixed ball.
Projecting translations onto this ball cannot increase residuals, so the transformations may be restricted to a compact set.
The expected squared error is linear in the weights and in a probability distribution over transformations.
The minimax theorem~\cite{sion1958minimax} therefore gives optimal weights $\boldsymbol\lambda^*$ and a random fit $T$ satisfying
\begin{equation}
\mathbb E\|T(\mathbf p_i)-\mathbf q_i\|^2\le b\quad\forall i.
\label{eq:appendix-mixture-residual-bound}
\end{equation}
The $\boldsymbol\lambda^*$-weighted error is at least $b$ for every fit and has mean at most $b$, so the distribution uses only weighted minimizers.
Every such minimizer aligns the $\boldsymbol\lambda^*$-weighted centroids $\bar{\mathbf p},\bar{\mathbf q}$.
Thus $T(\mathbf p)=\mathbf U(\mathbf p-\bar{\mathbf p})+\bar{\mathbf q}$ for a random $\mathbf U\in\mathrm O(d)$.

Take an SVD $\mathbb E\mathbf U=\mathbf N=\mathbf A\boldsymbol\Sigma\mathbf B^\top$ and set $\mathbf R=\mathbf A\mathbf B^\top$.
The singular values $\sigma_j$ of $\mathbf N$ lie in $[0,1]$, so $|2\sigma_j-1|\le1$ and $\|2\mathbf N-\mathbf R\|_2\le1$.
Hence $\mathbf y^\top(2\mathbf N-\mathbf R)\mathbf x\le\|\mathbf x\|\|\mathbf y\|\le(\|\mathbf x\|^2+\|\mathbf y\|^2)/2$.
Expanding the squared errors then gives
\begin{equation}
\tfrac12\|\mathbf R\mathbf x-\mathbf y\|^2
\le\mathbb E\|\mathbf U\mathbf x-\mathbf y\|^2.
\label{eq:appendix-averaged-fit-bound}
\end{equation}
Taking $\mathbf x=\mathbf p_i-\bar{\mathbf p}$ and $\mathbf y=\mathbf q_i-\bar{\mathbf q}$ proves~\eqref{eq:weighted-fitting-bound}.
\end{proof}

\begin{proof}[Proof of Lemma~\ref{lem:local-global-affine}]
Put $\varepsilon=\rho_{d+2}(\mathcal C)$, $L=\sum_i|\alpha_i|$, and $\mathbf x=\sum_i\alpha_i\mathbf p_i$.
Fix a unit vector $\mathbf u$ and maximize $\mathbf u^\top\sum_i\beta_i\mathbf q_i$ subject to $\sum_i\beta_i\mathbf p_i=\mathbf x$, $\sum_i\beta_i=0$, and $\sum_i|\beta_i|\le L$.
Writing each $\beta_i$ as the difference of two nonnegative variables gives a bounded linear program with $d+1$ equalities and one bound on their sum.
Adding a slack variable for this bound gives $d+2$ equality constraints.
An optimal basic solution has at most $d+2$ nonzero variables, and therefore uses at most $d+2$ correspondences.
Since $\boldsymbol\alpha$ is feasible, applying Lemma~\ref{lem:affine-relation} to those correspondences gives
\begin{equation}
\mathbf u^\top\sum_i\alpha_i\mathbf q_i
\le\mathbf u^\top\sum_i\beta_i\mathbf q_i
\le\|\mathbf x\|+\varepsilon L.
\label{eq:appendix-local-projection-bound}
\end{equation}
Maximizing over $\mathbf u$ bounds the target norm. Exchanging source and target gives the reverse inequality.
\end{proof}

\begin{proof}[Proof of Lemma~\ref{lem:few-vector-representation}]
Let $\bar{\mathbf p}=\sum_iw_i^2\mathbf p_i$ and $\bar{\mathbf q}=\sum_iw_i^2\mathbf q_i$ be the weighted centroids, and form matrices with columns
\begin{equation}
\mathbf F_i=w_i(\mathbf p_i-\bar{\mathbf p}),\qquad
\mathbf G_i=w_i(\mathbf q_i-\bar{\mathbf q}).
\label{eq:weighted-centered-clouds}
\end{equation}
The optimal translation is $\bar{\mathbf q}-\mathbf R\bar{\mathbf p}$~\cite{arun1987least}, giving
\begin{equation}
E(\mathbf w)=\min_{\mathbf R\in\mathrm O(d)}\|\mathbf R\mathbf F-\mathbf G\|_F^2.
\label{eq:weighted-procrustes}
\end{equation}

On $S=(\ker\mathbf F\cap\ker\mathbf G)^\perp$, the Fuchs--Caves identity~\cite{fuchs1995mathematical} and the Procrustes trace formula give an orthonormal basis $\mathbf v_1,\ldots,\mathbf v_r$ such that
\begin{equation}
E(\mathbf w)=\sum_{j=1}^r
(\|\mathbf F\mathbf v_j\|-\|\mathbf G\mathbf v_j\|)^2.
\label{eq:appendix-procrustes-length-identity}
\end{equation}
For positive-definite real Gram matrices, the identity is attained by an eigenbasis of a real symmetric matrix; rescaling removes its trace-normalization requirement.
For singular Gram matrices, add $\eta\mathbf I_S$ to both restrictions to $S$.
As $\eta\downarrow0$, compactness gives a convergent subsequence of orthonormal bases, and continuity preserves the identity.

Finally, $r=\dim S\le2d$ and $\mathbf F\mathbf w=\mathbf G\mathbf w=0$, so $\mathbf v_j^\top\mathbf w=0$.
Thus $\boldsymbol\alpha^{(j)}=\mathbf w\odot\mathbf v_j$ has zero sum, centering cancels, and~\eqref{eq:appendix-procrustes-length-identity} gives the stated point combinations.
\end{proof}

\begin{proof}[Proof of Proposition~\ref{prop:squared-distance}]
Use the centered matrices in~\eqref{eq:weighted-centered-clouds}, set $\mathbf H=\mathbf I-\mathbf w\mathbf w^\top$, and define
\begin{equation}
\Delta_{ik}=w_iw_k\bigl(\|\mathbf{p}_i-\mathbf{p}_k\|^2-\|\mathbf{q}_i-\mathbf{q}_k\|^2\bigr).
\label{eq:appendix-weighted-distance-matrix}
\end{equation}
Expanding squared distances gives
\begin{equation}
\mathbf F^\top\mathbf F-\mathbf G^\top\mathbf G
=-\tfrac12\mathbf H\boldsymbol\Delta\mathbf H.
\label{eq:appendix-centered-distance-identity}
\end{equation}
Since $\mathbf H$ is an orthogonal projection and $\|\mathbf w\|_2=1$ gives $\|\boldsymbol\Delta\|_F\le h(\mathcal C)$, the Procrustes bound~\cite{maystre2025procrustes} yields
\begin{equation}
\begin{aligned}
E(\mathbf w)&\le\sqrt{2d}\|\mathbf F^\top\mathbf F-\mathbf G^\top\mathbf G\|_F\\
&\le\tfrac{\sqrt{2d}}2\|\boldsymbol\Delta\|_F
\le\tfrac{\sqrt{2d}}2h(\mathcal C).
\end{aligned}
\label{eq:appendix-weighted-squared-distance-bound}
\end{equation}
Maximizing over $\mathbf w$ and applying Lemma~\ref{lem:weighted-fitting} gives~\eqref{eq:squared-certificate}.
\end{proof}

\begin{proof}[Proof of Lemma~\ref{lem:proper-local-equivalence}]
Because there are at most $d$ source points, they lie in an affine hyperplane.
Reflection $J$ across this hyperplane fixes them all.
If an optimal unrestricted fit $T$ is improper, $T\circ J$ is proper and has the same residuals.
Thus restricting the fit to proper motions does not change the optimal error.
\end{proof}

\begin{proof}[Proof of Lemma~\ref{lem:anchor-control}]
If $\mathcal P$ has affine dimension $r<d$, choose the vertices of a maximum-volume $r$-simplex as anchors.
Each point has coordinates $\mathbf p=\sum_j\beta_j\mathbf a_j$ with $\sum_j\beta_j=1$.
Replacing vertex $j$ multiplies its volume by $|\beta_j|$, hence $|\beta_j|\le1$.
The affine map $T-U$ therefore gives $\|T(\mathbf p)-U(\mathbf p)\|\le(r+1)\max_j\|T(\mathbf a_j)-U(\mathbf a_j)\|$, with $r+1\le d$.

For full-dimensional $\mathcal P$ and $d\ge2$, choose the $d$ vertices of a maximum-volume $(d-1)$-simplex $\mathcal A$.
For the rotations $\mathbf R,\mathbf S\in\mathrm{SO}(d)$ of $T,U$, halving the planar rotation angles gives an orthogonal square root $\mathbf Q^2=\mathbf R^\top\mathbf S$.
The map $f=\mathbf Q^\top\mathbf R^\top(T-U)$ preserves disagreement lengths and has skew-symmetric linear part $\mathbf K=\mathbf Q^\top-\mathbf Q$.
Choose the origin in the anchor hyperplane and let $\mathbf n$ be its unit normal.
Write $\mathbf x=\mathbf v+h\mathbf n$, with $\mathbf v\perp\mathbf n$.
Let $\beta_j(\mathbf v)$ be the barycentric coordinates of $\mathbf v$ and $\mathbf g_j$ their gradients within the hyperplane.
Interpolation and $\mathbf K^\top=-\mathbf K$ give
\begin{equation}
f(\mathbf x)=\sum_{j=1}^d
[\beta_j\mathbf I+h(\mathbf n\mathbf g_j^\top-
\mathbf g_j\mathbf n^\top)]f(\mathbf a_j).
\label{eq:appendix-anchor-interpolation}
\end{equation}
The $j$th coefficient matrix has norm $\sqrt{\beta_j^2+h^2\|\mathbf g_j\|^2}$.
Since $\|\mathbf g_j\|$ is the reciprocal of the altitude at $\mathbf a_j$, this is also the volume ratio when $\mathbf a_j$ is replaced by $\mathbf x$.
For $\mathbf x\in\mathcal P$, maximality bounds each factor by one, giving $\|f(\mathbf x)\|\le\sum_j\|f(\mathbf a_j)\|\le d\max_j\|f(\mathbf a_j)\|$.
\end{proof}
\end{appendix}

\end{document}

%% file: macros.tex
\usepackage{amsmath} %
\usepackage{amssymb} %
\usepackage{amsfonts}
\usepackage{makecell}
\usepackage{varwidth}

\usepackage[dvipsnames, table]{xcolor}
\usepackage{graphicx}
\usepackage{tabularx}
\usepackage{multirow}
\usepackage{mathtools}
\usepackage{esvect} %
\usepackage{siunitx}
\usepackage{caption}
\usepackage{cite}
\usepackage{flushend} %
\usepackage[pdftex, pdfstartview={FitV},bookmarksopen=true,plainpages = false, colorlinks=true, linkcolor=black, citecolor = black, urlcolor = black,filecolor=black , pagebackref=false,hypertexnames=false, plainpages=false, pdfpagelabels]{hyperref}

\usepackage[T1]{fontenc} %
\usepackage{mathtools, cuted} %

\usepackage{balance} %
\usepackage{booktabs} %
\usepackage{array}
\usepackage[export]{adjustbox} %

\newcolumntype{x}[1]{%
>{\raggedleft\hspace{0pt}}p{#1}}%

\usepackage{algorithm}
\usepackage[noend]{algpseudocode} %
\definecolor{commentclr}{RGB}{34, 139, 34}

\usepackage{tikz}

\usetikzlibrary{positioning, fit, shapes.geometric, calc,arrows.meta}

\usepackage{cleveref}
\usepackage{subcaption}
\DeclareCaptionLabelSeparator{periodspace}{.\quad}
\usepackage{amsthm}
\newtheorem{lemma}{Lemma}

\newtheorem{theorem}{Theorem}

\theoremstyle{definition}
\newtheorem{example}{Example}

\usepackage{letltxmacro}
\LetLtxMacro\orgvdots\vdots
\LetLtxMacro\orgddots\ddots

\makeatletter
\DeclareRobustCommand\vdots{%
	\mathpalette\@vdots{}%
}
\newcommand*{\@vdots}[2]{%
	\sbox0{$#1\cdotp\cdotp\cdotp\m@th$}%
	\sbox2{$#1.\m@th$}%
	\vbox{%
		\dimen@=\wd0 %
		\advance\dimen@ -3\ht2 %
		\kern.5\dimen@
		\dimen@=\wd2 %
		\advance\dimen@ -\ht2 %
		\dimen2=\wd0 %
		\advance\dimen2 -\dimen@
		\vbox to \dimen2{%
			\offinterlineskip
			\copy2 \vfill\copy2 \vfill\copy2 %
		}%
	}%
}
\DeclareRobustCommand\ddots{%
	\mathinner{%
		\mathpalette\@ddots{}%
		\mkern\thinmuskip
	}%
}
\newcommand*{\@ddots}[2]{%
	\sbox0{$#1\cdotp\cdotp\cdotp\m@th$}%
	\sbox2{$#1.\m@th$}%
	\vbox{%
		\dimen@=\wd0 %
		\advance\dimen@ -3\ht2 %
		\kern.5\dimen@
		\dimen@=\wd2 %
		\advance\dimen@ -\ht2 %
		\dimen2=\wd0 %
		\advance\dimen2 -\dimen@
		\vbox to \dimen2{%
			\offinterlineskip
			\hbox{$#1\mathpunct{.}\m@th$}%
			\vfill
			\hbox{$#1\mathpunct{\kern\wd2}\mathpunct{.}\m@th$}%
			\vfill
			\hbox{$#1\mathpunct{\kern\wd2}\mathpunct{\kern\wd2}\mathpunct{.}\m@th$}%
		}%
	}%
}
\makeatother

\let\oldnl\nl%
\newcommand{\nonl}{\renewcommand{\nl}{\let\nl\oldnl}}%
\makeatother

\def\ker{\text{ker}}

\graphicspath{{figures/}}

\crefname{figure}{Fig.}{Figs.}
\usepackage[percent]{overpic}

\algblockdefx{RetStruct}{EndRetStruct}[1]{\textbf{return} \textbf{struct} \texttt{#1}}{\textbf{end struct}}

\algblockdefx{Struct}{EndStruct}[1]{\textbf{struct} \texttt{#1}}{\textbf{end struct}}

\definecolor{ResultRed}{HTML}{F4CCCC}
\definecolor{ResultYellow}{HTML}{FFF2CC}
\definecolor{ResultGreen}{HTML}{D9EAD3}

\newtheorem{proposition}{Proposition}
\theoremstyle{definition}

\hypersetup{
  urlcolor=red,
}